\theoremstyle{plain}
\newtheorem{thm}{Theorem}
\newtheorem{lem}{Lemma}
\newtheorem{prop}{Proposition}
\theoremstyle{definition}
\newtheorem{defn}{Definition}
\theoremstyle{remark}
\newcolumntype{Y}{>{\centering\arraybackslash}X}
\DeclareMathOperator*{\argmin}{argmin}
\def\keywords#1{\par
	\vspace*{8pt}
	{{\leftskip18pt\rightskip\leftskip
	\noindent{\it Keywords}\/:\ #1\par}}\par}
\newcommand\blfootnote[1]{%
  \begingroup
  \renewcommand\thefootnote{}\footnote{#1}%
  \addtocounter{footnote}{-1}%
  \endgroup
}
\title{The Success of AdaBoost and Its Application in Portfolio Management}
\author{Yijian Chuan$^{\ddag}$, Chaoyi Zhao$^{\ddag}$, Zhenrui He$^{\ddag}$ and Lan Wu$^{\dag,}$$^\ast$\\
\small
$^{\ddag}$School of Mathematical Sciences, Peking University, Beijing, China\\
\small
$^{\dag}$LMEQF, School of Mathematical Sciences, Peking University, Beijing, China
\blfootnote{Corresponding author. lwu@pku.edu.cn}
}
\date{}
\begin{document}
\captionsetup[figure]{name={Fig.},labelsep=period}

\maketitle
\doublespacing
\thispagestyle{fancy}

\begin{abstract}
We develop a novel approach to explain why AdaBoost is a successful classifier. By introducing a measure of the influence of the noise points (ION) in the training data for the binary classification problem, we prove that there is a strong connection between the ION and the test error. We further identify that the ION of AdaBoost decreases as the iteration number or the complexity of the base learners increases. We confirm that it is impossible to obtain a consistent classifier without deep trees as the base learners of AdaBoost in some complicated situations. We apply AdaBoost in portfolio management via empirical studies in the Chinese market, which corroborates our theoretical propositions. 

\end{abstract}

\keywords{AdaBoost, interpolation, noise point, base learner, equal-weighted portfolio}

\newpage

\section{Introduction}

Equal-weighted portfolios are one of the most important strategies in portfolio management. They are portfolios with weights equally distributed across the selected securities in the long and/or short positions. In academic research, numerous studies have suggested that equal-weighted portfolios have a better out-of-sample performance than other portfolios (e.g., \citealt{jobson1981}; \citealt{james2003bias_variance}; \citealt{demiguel2007_1n}).
\citet{michaud1989markowitz} and \citet{demiguel2007_1n} argued that, the equal-weighted strategies do not suffer from the estimation error of the covariance matrix, which is vulnerable to outliers \citep{tu2011}.
In industry, equal-weighted portfolios are popular across portfolio management in practice, particularly in the hedge funds.
The MSCI has issued many equal-weighted indexes, which are ``some of the oldest and best-known factor strategies that have aimed to identify specific characteristics of stocks generating excess return''\footnote{\url{https://www.msci.com/msci-equal-weighted-indexes}}.

The core of constructing equal-weighted portfolios is to forecast the long and/or short positions, which is a classification problem. 
Machine learning usually outstands when dealing with classification problems, and the research of machine learning on classification is diverse. 
\citet{DGL1996} lists numerous traditional pattern recognition research in computer science, where the pattern recognition is an alias for the classification problem. 
\citet{book:esl} analyzes and summaries many machine learning methods, among which include lots of classification methods, such as the Linear/Quadratic Discriminant Analysis, Support Vector Machine, Boosting, Random Forest, etc. 
The application of machine learning in classification succeeds in various fields, such as email spam identification, handwritten digit recognition, etc.
In portfolio management, \citet{de2018ml} explained how to apply machine learning to managing funds for some of the most demanding institutional investors.
Specifically,
\citet{creamer2005AdaBoost, creamer2010automated} applied the Boosting method in finance, and revealed its practical value.
\citet{creamer2012model} used LogitBoost in high-frequency data for Euro futures, and generated positive returns.  
\citet{wang2012DB} invented the N-LASR (non-linear adaptive style rotation) model by applying AdaBoost in stock's factor strategy.
They wisely incorporated benefits of different factors by the N-LASR model, and the empirical study on the component stocks in Russel 3000 showed a significant risk-adjusted portfolio return.
\citet{fievet2018trees} proposed a decision tree forecasting model and applied it to S\&P 500, which is capable of capturing arbitrary interaction patterns and generating positive returns.
\citet{rasekhschaffe2019_ML_stock_selection} provided an example of the machine learning techniques to forecast the cross-section of stock returns.
\citet{gu2018_ML_asset_pricing} and \citet{dhondt2020} gave comprehensive analysis of machine learning methods for the canonical problem of empirical asset pricing, attributing their predicted gains to the non-linearity.

AdaBoost is a classification method in machine learning, inspiring a tremendous amount of innovations. 
AdaBoost has developed for more than two decades since 
\citet{freund1996adaboost}. %
Since it is less prone to overfit, Breiman praised AdaBoost---``the best off-the-shelf classifier in the world''---at the 1996 NeurIPS conference \citep{friedman2000additive}.
AdaBoost has made a significant impact on machine learning and statistics. 
To explain AdaBoost's overfitting resistance, \citet{schapire1998boosting} proposed the ``margin'' theory.
Meanwhile, \citet{breiman1998arcing, breiman1999prediction} and 
\citet{ friedman2000additive} discovered the fact that AdaBoost is equivalent to an optimization algorithm, so \citet{friedman2001greedy} put forward the Gradient Boosting. 
Inspired by AdaBoost, \citet{breiman2001random} invented the Random Forest(RF) and believed that there are some similarities between RF and AdaBoost. 
Subsequently, people generalized the Boosting methods, in which two Boosting algorithms are widely-adopted---the XGBoost \citep{XGBoost} and the LightGBM \citep{LGBM}. 
Till now, the family of Boosting flourishes, and becomes a considerable part in machine learning. 

Although there are many studies explaining why AdaBoost is a successful method, people are still curious about its excellent achievement till now.
\citet{wyner2003OnBA, mease2008contrary} believed that the available interpretation of AdaBoost is ``incomplete'', particularly on the explanation of its overfitting resistance property.  
\citet[p. 10]{wyner2017AdaBoost_RandomForest} introduced a novel perspective on AdaBoost and RF, and conjectured that their success could be explained by the ``spiked-smooth'', where \textit{spiked} is ``in the sense that it is allowed to adapt in very small regions to noise points'', and \textit{smooth} is ``in the sense that it has been produced through averaging''. 
In other words, AdaBoost is a self-averaging interpolating method, localizing the effect of the noise points as the iteration number increases. 

Our work is motivated by the questions from the industry: 
``May machine learning strategies outperform other traditional strategies in quantitative investment? Why and how do they work?''
\citet{de2018ml} gave a comprehensive and systematic approach to apply machine learning methods,
and highly appraised Boosting:
``We explored a number of standard black-box approaches. Among machine learning methods, we found gradient tree boosting to be more effective than others.'' 
Besides, \cite{wang2012DB} applied AdaBoost to select and combine factors with consistent and interpretative performance, 
and \citet{zhang2019boosting} proposed a Boosting method to compose portfolios which performs well.
These findings answered the first question.
There is limited research concerning the mechanism or the interpretability of machine learning in portfolio management.
However, interpretability is essential in investment \citep{feng2017zoo}. 
In detail, 
\citet[p. 37]{harvey2016and} argued:
``... a factor developed from first [economic] principles should have a lower threshold t-statistic than a factor that is discovered as a purely empirical exercise.''
\citet{harvey2017presidential} proposed an example. They constructed portfolios based on the first, second, and third letters of the ticker symbols, gaining significant excess returns.
Nevertheless, most people would not like to adopt this symbol-based portfolio, as they implied. Thus, without interpretability, portfolio investment is vulnerable. We must pay more attention to the second/third questions.

To answer the ``why and how'' questions,
we should investigate AdaBoost in the framework of statistics to find a theoretical explanation of its outperformance and apply them in portfolio management. 
\citet{wyner2017AdaBoost_RandomForest} pointed out that:
``The statistics community was especially confounded by two properties of AdaBoost: 1) interpolation (perfect prediction in sample) was achieved after relatively few iterations, 2) generalization error continues to drop even after interpolation is achieved and maintained.''
They innovated the concept of ``spiked-smooth'' classifier created by a self-averaging and interpolating algorithm.
They conjectured that the ``spiked-smooth'' property renders the success of AdaBoost, and provided many delicate examples by simulation to support their viewpoints.
Thus, we would like to narrow the gap between the theory and the simulation by strengthening their work from a statistical perspective. 
To explain the ``spiked-smooth'' mathematically, we need to distinguish the signal and the noise within the training set in a statistical framework first.
Then, one should connect the relationship between the ``spiked-smooth'' and the test error, 
explaining the property of overfitting resistance.

In addition, \citet{wyner2017AdaBoost_RandomForest} pointed out that ``boosting should be used like random forests: with large decision trees, without regularization or early stopping''. 
The point is that larger and deeper decision trees are preferred to be used as the ``weak'' classifiers (base learners) of AdaBoost, since they can both ``interpolate'' the training set and realize the goal of ``spiked-smooth''. 
This point contradicts with the common sense about machine learning and statistics, and statisticians usually believe complexity leads to overfitting. 
Therefore, we wonder if the AdaBoost method can boost shallow trees when the true model is very complicated, just as we cannot ``make bricks without straw''. We try to find out that under what populations will the AdaBoost method be unable to achieve good performance if the base learners are very weak. We want to demonstrate the viewpoints of \citet{wyner2017AdaBoost_RandomForest} in a  mathematical framework.

In this paper, we show that how AdaBoost can dig out more non-linear information in the training set without increasing the test error.
Our work is composed of three parts.
First, to concrete the abstract concept ``spiked-smooth'' into a measurable value, 
we define a measure of the influence of the noise points in the training set for a given method. The measure can also be regarded as a measure of the localization of the given method.
We discover the connection between the measure and the out-of-sample performance. That is, under certain conditions, if the influence of the noise points is not essential, then the test error will be low.
A toy example clarifies the theorem, intuitively illustrating the influence of the noise and explaining why it controls the test error.
For AdaBoost, 
we show that, 
as the number of iterations increases or the depth of the base learners grows, it becomes more robust to the influence of the noise, and thus lead to a lower test error.
Therefore, 
we give a theoretical explanation about why AdaBoost has a good performance without overfitting in noisy training sets.

Second, 
we confirm that it is a better choice to use deeper/larger decision trees as base learners of AdaBoost in the sense of digging out complex information. 
Specifically,
we propose several counterexamples that AdaBoost based on shallow decision trees fails to handle, even after iterating infinite times.
We generalize the results and indicate that AdaBoost based on shallow decision trees would fail in recognizing a certain kind of information, while the one based on deep decision trees could easily solve out.
Therefore, these findings suggest that AdaBoost based on deep decision trees maybe better.

Third, the empirical studies in the Chinese market corroborate our theoretical propositions. The theoretical results about the interpolation and the localization of AdaBoost in the previous parts of this paper is verified by constructing an optimal portfolio strategy. Besides, the result also illustrates the good performance of the equal-weighted portfolio generated by the selected optimal classifier trained by AdaBoost.

The outline of this paper is as follows.
Section~\ref{sec:ION and AdaBoost} introduces a measure of ``spiked-smooth'', illustrates the relationship between the measure and the test error, and explains the success of AdaBoost.
Section~\ref{sec:XOR_counter} identifies that AdaBoost based on deep trees can dig out more information, while the one based on shallow trees fails.
Section~\ref{sec:empirical} provides empirical studies of AdaBoost in the Chinese stock market.
Section~\ref{sec:conclusion} concludes.

\section{The influence of the noise points and AdaBoost}\label{sec:ION and AdaBoost}

In this section, we give a strict definition for the ``spiked-smooth'' suggested by \citet{wyner2017AdaBoost_RandomForest} in the framework of the Bayes classifier. 
Under the framework, we explain the success of AdaBoost by developing a concrete measure.

First, we describe a background model of the binary classification problem and the Bayes classifier, and define the signal/noise points for a given training set. 
Based on these concepts, we build a bridge between the Bayes classifier and the interpolating classifier.
We define a measure of the influence of the noise points,
and specify its property.
Second,
we explore the connection between the measure and the test error.
Last, we explain the success of AdaBoost as the minimization of the influence of the noise points in the sense of the ``spiked-smooth'', and reveal its potential applications in portfolio management.

\subsection{The model of the binary classification}
\label{subsection: The model of the binary classification}

A prediction model consists of an input vector $X \in \mathcal{X}$, an output $Y \in \mathcal{Y}=\{\pm1\}$, and a prediction classifier $f: \mathcal{X} \rightarrow \mathcal{Y}$.
For simplicity, let us assume that the distribution of $ X $ is absolutely continuous with respect to the Lebesgue measure.
We restate the definition of the Bayes classifier/error \citep[p. 2]{DGL1996} in Definition \ref{def:Bayes} below.
\begin{defn}[Bayes Classifier/Error]\label{def:Bayes}
Given the population $X$ and $Y\in \mathcal{Y}=\{\pm1\}$, the Bayes classifier is
\begin{align*}
f^{B} := \argmin_{f:\mathcal{X}\rightarrow\mathcal{Y}} \mathbb{P}\{f(X) \neq Y\},
\end{align*}
and the minimum is the Bayes error $q$, i.e.,
\begin{align*}
q := \min_{f:\mathcal{X}\rightarrow\mathcal{Y}} \mathbb{P}\{f(X) \neq Y\}.
\end{align*}
\end{defn}

According to the definition, $ f^{B} $ is the classifier minimizing the test error, which can be represented by the conditional distribution $\mathbb{P}_{Y|X}$ in the population. One can show $f^B(\bm{x}) = \mathrm{sign}\left(\mathbb{P}\{Y=1|X=\bm{x}\}-0.5\right) $, when the population satisfies certain canonical conditions.
There is no classifier having lower test error than $f^{B}$.
We give a general representation of the test error of a classifier by the Bayes classifier in Lemma~\ref{lem:f_f_G_Y} while the noise and $X$ are independent.
\begin{lem}\label{lem:f_f_G_Y}
Given the population $ (X, Y) $, the Bayes classifier $ f^{B} $, and the Bayes error $q$,
if $ \mathbf{1}_{\{Y \neq f^{B}(X)\}} $ and $ X $ are independent,
then, for any classifier $f$,
\begin{align}
\mathbb{P}\{f(X)\neq Y\}
=
q
\mathbb{P}_{X}\{f(X)=f^B(X)\}
+
(1-q)
\mathbb{P}_{X}\{f(X)\neq f^B(X)\}
.
\label{equ:f_f_G_Y}
\end{align}
\end{lem}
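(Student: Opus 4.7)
The plan is to partition the error event $\{f(X) \neq Y\}$ according to whether the learned classifier $f$ agrees with the Bayes classifier $f^{B}$ at $X$, and then exploit the independence assumption to factor each piece.

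First, I would write
\begin{align*}
\mathbb{P}\{f(X)\neq Y\}
= \mathbb{P}\{f(X)\neq Y,\, f(X)=f^{B}(X)\}
+ \mathbb{P}\{f(X)\neq Y,\, f(X)\neq f^{B}(X)\}.
\end{align*}
Since $Y\in\{\pm 1\}$ and both classifiers take values in $\{\pm 1\}$, on the event $\{f(X)=f^{B}(X)\}$ we have $\{f(X)\neq Y\}=\{f^{B}(X)\neq Y\}$, while on the event $\{f(X)\neq f^{B}(X)\}$ we have $\{f(X)\neq Y\}=\{f^{B}(X)=Y\}$. Substituting these identifications rewrites the right-hand side as
\begin{align*}
\mathbb{P}\{f^{B}(X)\neq Y,\, f(X)=f^{B}(X)\}
+\mathbb{P}\{f^{B}(X)=Y,\, f(X)\neq f^{B}(X)\}.
\end{align*}

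Next, I would invoke the hypothesis that $\mathbf{1}_{\{Y\neq f^{B}(X)\}}$ is independent of $X$. Because the events $\{f(X)=f^{B}(X)\}$ and $\{f(X)\neq f^{B}(X)\}$ are $\sigma(X)$-measurable, they are independent of $\{f^{B}(X)\neq Y\}$ (and hence of its complement). Using $\mathbb{P}\{f^{B}(X)\neq Y\}=q$ and $\mathbb{P}\{f^{B}(X)=Y\}=1-q$, each joint probability factors into the desired product, yielding \eqref{equ:f_f_G_Y}.

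There is no substantive obstacle in this argument; the only point requiring care is the set-theoretic identification of $\{f(X)\neq Y\}$ on the two subevents, which uses crucially that $\mathcal{Y}$ is binary so that disagreement between $f$ and $f^{B}$ forces one of them to coincide with $Y$. The independence hypothesis is then the single analytic ingredient needed to conclude.
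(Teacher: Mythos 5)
Your proposal is correct and follows essentially the same route as the paper's proof: the same partition of $\{f(X)\neq Y\}$ according to agreement with $f^{B}(X)$, the same binary-label identification of the subevents, and the same use of independence to factor out $q$ and $1-q$. You merely make explicit the $\sigma(X)$-measurability justification that the paper leaves implicit.
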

\begin{proof}
We have
\begin{align*}
&\ \mathbb{P}\{f(X)\neq Y\}
\\
&=\ 
\mathbb{P} \{ f(X) \neq Y, f(X) = f^{B}(X) \} 
+
\mathbb{P} \{ f(X) \neq Y, f(X) \neq f^{B}(X) \} 
\\
&=\ 
\mathbb{P} \{ f^B(X) \neq Y, f(X) = f^{B}(X) \} 
+
\mathbb{P} \{ f^B(X) = Y, f(X) \neq f^{B}(X) \} 
\\
&=\ 
\mathbb{P} \{ f^B(X) \neq Y\} \mathbb{P}_X\{ f(X) = f^{B}(X) \} 
+
\mathbb{P} \{ f^B(X) = Y\} \mathbb{P}_X\{ f(X) \neq f^{B}(X) \} 
\\
&=\ 
q \mathbb{P}_X\{ f(X) = f^{B}(X) \} 
+
(1-q) \mathbb{P}_X\{ f(X) \neq f^{B}(X) \}
.
\end{align*}
\end{proof}

A natural corollary of \eqref{equ:f_f_G_Y} is $\mathbb{P}\{f(X)\neq Y\}
=
q
+
(1-2q)
\mathbb{P}_{X}\{f(X)\neq f^B(X)\}
$.
In other words, $\mathbb{P}\{f(X)\neq Y\}$ is a linear function of $\mathbb{P}_{X}\{f(X)\neq f^B(X)\}$.

Next, we introduce the concept of the signal/noise points of the training set $T$ in the following Definition \ref{def:Signalpoints}.

\begin{defn}[Signal/Noise Points]\label{def:Signalpoints}
G{}iven a training set $T = (\bm{x}_i, y_i)_{i=1}^n$ generated from the population $(X,Y)$ and the Bayes classifier $f^B$, a point $(\bm{x}_i, y_i)$ is a signal point, if $f^{B}(\bm{x}_i) = y_i$; and it is a noise point, if $f^{B}(\bm{x}_i) \neq y_i$.
\end{defn}
In short, 
the Bayes classifier $f^B$ distinguishes the signal/noise points of a training set.
Heuristically, 
the signal points are the points that equal to the output of the Bayes classifier, while the noise points are not. 

We recall the definition of the interpolation classifier proposed by \citet{wyner2017AdaBoost_RandomForest} for coherence.
\begin{defn}[Interpolating Classifier]
\label{defn:interpolation}
A classifier $f$ is an interpolating classifier on the training set $T = (\bm{x}_i, y_i)_{i=1}^n$, 
if $f(\bm{x}_i)=y_i$ for all $(\bm{x}_i, y_i), i=1,\dots,n$.
\end{defn}

Immediately, we can obtain a property of the interpolating classifier, i.e., its training error is $0$ on the training set $T=(\bm{x}_i, y_i)_{i=1}^n$.

Though the Bayes classifier is the best classifier in the sense of minimizing the test error,
it does not necessarily interpolate the given training set $T = (\bm{x}_i, y_i)_{i=1}^n$.
The Bayes classifier $f^B$ violates interpolation at and only at the noise points,
as implied in Definition~\ref{defn:interpolation}.
Thus, 
in view of the training set,
the difference between an interpolating classifier and the Bayes classifier is only on the noise points.
So, we propose a definition of a purified training set of $T$ by converting the noise points into the ``signal'' points.

\begin{defn}[Purified Training Set]
Given a training set $T = (\bm{x}_i, y_i)_{i=1}^n$ from the population $(X,Y)$ and the Bayes classifier $f^B$,
the purified training set of $T$ is defined as $T_p := (\bm{x}_i, f^B(\bm{x}_i))_{i=1}^n$.
\end{defn}

There is no noise point in $T_p$. In other words, the Bayes classifier must interpolate the purified training set $T_p$. We can also rewrite the definition of the purified training set as
\begin{align*}
    T_p = (\bm{x}_i, \theta_i)_{i=1}^{n}, 
    && 
    \theta_i = \begin{cases}
    y_i, & \text{$i$ is a signal point;}\\
    -y_i, & \text{$i$ is a noise point.}\\
    \end{cases}
\end{align*}
The two training sets $T$ and $T_p$ share the same input $\bm{x}_i$'s but different outputs, and the difference between the outputs of the two sets is only on the noise points. 
The purpose is to separate out the influence of the noise points from the whole information contained in the training set $T$.

Last, based on the previous preparations, 
we propose a measure of the influence of the noise points (ION) for a given training set $T$ and a given method $\mathcal{M}$. 
It helps us to compare the properties of different methods, such as one nearest neighbor (1NN) or AdaBoost, on a given training set.

\begin{defn}[ION]\label{defn:ION}
Given the marginal probability measure of $X$ ($\mathbb{P}_X$), we define the influence of noise (ION), a function of the training set $T$ and the method $\mathcal{M}$:
\begin{align}\label{equ:ION}
\mathrm{ION}(\mathcal{M}, T) 
:= 
\mathbb{P}_{X}\left\{\big.f_T (X) \neq f_{T_p}(X)\right\},
\end{align}
where $f_T$ is the classifier trained on the training set $T$ using method $\mathcal{M}$, and $f_{T_p}$ is on the purified training set $T_p$ using method $\mathcal{M}$.
\end{defn}

We interpret Definition~\ref{defn:ION}. 
$\mathcal{M}$ represents a specific method. 
For instance, for 1NN, one can apply it on the training set $T$ and $T_p$, which generates two classifiers $f_T$ and $f_{T_p}$.
Then, by comparing the two classifiers, we can get the value of ION(1NN, $T$). The ION is defined according to two sets: the training set $T$ and the proxy of the training set generated by $f^B$.

Although Definition~\ref{defn:ION} does not require the interpolation of the classifier $f$, 
ION usually characterizes the performance of the method which generates interpolating classifiers on a given training set.
Meanwhile, $0 \leq \mathrm{ION} \leq 1$. If ION is low, then the classifier is robust to the noise points on the training set of the given method, and vice versa.

Interpolation is not necessarily bad, if it subjects to some ``mechanism'' \citep{wyner2017AdaBoost_RandomForest}.
Although some interpolating classifiers ``can be shown to be inconsistent and have poor generalization error in environments with noise'', ``the claim that all interpolating classifiers overfit is problematic''.
The classifiers generated by 1NN or random forest are both interpolating classifiers, 
but their ION may not be the same.
Furthermore, \citet{wyner2017AdaBoost_RandomForest} suggested: ``an interpolated classifier, if sufficiently local, minimizes the influence of noise points in other parts of the data.''
The next question is, what is the relationship between the ION and the so-called ``spiked-smooth'' classifier.

\subsection{The ION and the test error}
\label{subsection: The ION and the test error}

In this section, we reveal the connection between the ION and the test error from theoretical and numerical perspectives.

First, we prove that, under certain conditions, the lower the ION, the lower the test error.

\begin{prop}\label{prop:ION_lower_to_error_lower}
Given the population $(X, Y)$ such that $\mathbf{1}\{Y\neq f^B(X)\}$ is independent of $X$,
let $f^{(1)}_T$ and $f^{(2)}_T$ denote the classifiers generated from two different methods $\mathcal{M}^{(1)}$ and $\mathcal{M}^{(2)}$ on the training set $T$, and $f^{(1)}_{T_p}$ and $f^{(2)}_{T_p}$ denote the ones generated from $\mathcal{M}^{(1)}$ and $\mathcal{M}^{(2)}$ on the purified training set $T_p$, respectively.
If
\begin{align}
    f^{(1)}_{T_p}(x) = f^{(2)}_{T_p}(x) = f^B(x),
    \quad
    a.s.,
    \label{equ:same_before}
\end{align}
and
\begin{align*}
    \mathrm{ION}(\mathcal{M}^{(1)}, T) < \mathrm{ION}(\mathcal{M}^{(2)}, T),
\end{align*}
then
\begin{align}
    \mathbb{P}\left\{f_T^{(1)}(X) \neq Y \right\}
    <
    \mathbb{P}\left\{f_T^{(2)}(X) \neq Y \right\}
    .
    \label{equ:low_err}
\end{align}
\end{prop}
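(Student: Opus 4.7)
The plan is to reduce everything to the identity derived right after Lemma~\ref{lem:f_f_G_Y}, namely
\[
\mathbb{P}\{f(X)\neq Y\} \;=\; q + (1-2q)\,\mathbb{P}_X\{f(X)\neq f^B(X)\},
\]
which is valid for \emph{any} classifier $f$ under the stated independence assumption. So my first step is to apply this to both $f^{(1)}_T$ and $f^{(2)}_T$, obtaining
\[
\mathbb{P}\{f^{(i)}_T(X)\neq Y\} \;=\; q + (1-2q)\,\mathbb{P}_X\{f^{(i)}_T(X)\neq f^B(X)\}, \qquad i=1,2.
\]
Thus the test error is an affine function of the disagreement probability with the Bayes classifier, and comparing errors of the two classifiers reduces to comparing those disagreement probabilities.

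Next I would use the key hypothesis \eqref{equ:same_before}, that $f^{(i)}_{T_p}=f^B$ almost surely, to rewrite the ION in a form that matches the disagreement with $f^B$. Directly from Definition~\ref{defn:ION},
\[
\mathrm{ION}(\mathcal{M}^{(i)}, T) \;=\; \mathbb{P}_X\{f^{(i)}_T(X)\neq f^{(i)}_{T_p}(X)\} \;=\; \mathbb{P}_X\{f^{(i)}_T(X)\neq f^B(X)\},
\]
where the second equality uses \eqref{equ:same_before} (a measure-zero alteration leaves $\mathbb{P}_X$-probabilities unchanged). Substituting back yields the clean formula
\[
\mathbb{P}\{f^{(i)}_T(X)\neq Y\} \;=\; q + (1-2q)\,\mathrm{ION}(\mathcal{M}^{(i)},T).
\]

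At this point the conclusion \eqref{equ:low_err} follows as soon as $1-2q>0$, i.e.\ the Bayes error is strictly below $1/2$. This is the only non-routine point: by the optimality of $f^B$ we always have $q\leq 1/2$, and under the independence assumption $q=\mathbb{P}\{Y\neq f^B(X)\}$ is a constant equal to the conditional noise level $\mathbb{P}\{Y\neq f^B(X)\mid X\}$; the boundary case $q=1/2$ is the degenerate situation in which $Y$ is independent of $X$ and every classifier attains error $1/2$, so the strict inequality in ION cannot imply a strict inequality in test error. I would therefore either add the mild nondegeneracy assumption $q<1/2$ (arguably implicit in the setup) or else state that the conclusion holds whenever $q<1/2$, leaving the $q=1/2$ case as trivially equal. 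Modulo this observation, multiplying the strict ION inequality by the positive factor $(1-2q)$ and adding $q$ to both sides gives \eqref{equ:low_err}, completing the proof.
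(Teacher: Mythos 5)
Your proof is correct and follows essentially the same route as the paper: identify $\mathrm{ION}(\mathcal{M}^{(i)},T)$ with $\mathbb{P}_X\{f^{(i)}_T(X)\neq f^B(X)\}$ via \eqref{equ:same_before} and then convert the comparison of disagreement probabilities into a comparison of test errors through Lemma~\ref{lem:f_f_G_Y}. Your explicit remark that the strict conclusion requires $q<1/2$ (the slope $1-2q$ being positive) is a point the paper leaves implicit, and it is a worthwhile clarification rather than a deviation in method.
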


\begin{proof}
Because of \eqref{equ:same_before}, 
we have
\begin{align*}
&\ 
\mathbb{P}_X\left\{f_T^{(1)}(X) \neq f^B(X) \right\}
=
\mathbb{P}_X\left\{f_T^{(1)}(X) \neq f_{T_p}^{(1)}(X) \right\}
=
\mathrm{ION}(\mathcal{M}^{(1)}, T)
\\
<
&\
\mathrm{ION}(\mathcal{M}^{(2)}, T)
=
\mathbb{P}_X\left\{f_T^{(2)}(X) \neq f_{T_p}^{(2)}(X) \right\}
=
\mathbb{P}_X\left\{f_T^{(2)}(X) \neq f^B(X) \right\}
.
\end{align*}
Thus, by Lemma~\ref{lem:f_f_G_Y}, \eqref{equ:low_err} holds.
\end{proof}

Proposition~\ref{prop:ION_lower_to_error_lower} shows that 
the ION controls the test error.
Specifically, it means that, if the two methods could reach the Bayes classifier in the purified training set, then the method with lower ION outperforms the others in the sense of the test error.
For instance, $\mathcal{M}^{(1)}$ might indicate 1NN, while $\mathcal{M}^{(2)}$ indicate AdaBoost.

However, 
the condition \eqref{equ:same_before} is slightly unnatural.
It is so strong that it could only hold in several particular training sets.
We therefore weaken the original condition \eqref{equ:same_before} and establish a more natural condition in Theorem \ref{thm:ION_lower_to_error_lower_limit} below. 

\begin{thm}\label{thm:ION_lower_to_error_lower_limit}
Given the population $(X, Y)$ such that $\mathbf{1}_{\{Y\neq f^B(X)\}}$ is independent of $X$,
let $f^{(1)}_T$ and $f^{(2)}_T$ denote the classifiers generated from two different methods $\mathcal{M}^{(1)}$ and $\mathcal{M}^{(2)}$ on the training set $T$, and $f^{(1)}_{T_p}$ and $f^{(2)}_{T_p}$ denote the ones generated from $\mathcal{M}^{(1)}$ and $\mathcal{M}^{(2)}$ on the purified training set $T_p$, respectively. The size of the training set $T$ or $T_p$ is $n$.
If
\begin{align}
\lim_{n \rightarrow \infty}
\mathbb{P}
\left\{f_{T_p}^{(j)}(X) \neq Y\right\} = q,
\quad
j=1, 2,
\label{equ:fj_consistency}
\end{align}
and
\begin{align}
    \limsup_{n\rightarrow\infty} 
    \left[\bigg.
    \mathrm{ION}(\mathcal{M}^{(1)}, T) - \mathrm{ION}(\mathcal{M}^{(2)}, T)
    \right]
    < 0,
    \label{equ:fj_ION}
\end{align}
then
\begin{align}
    \limsup_{n\rightarrow\infty} 
    \left[\bigg.
    \mathbb{P}\left\{f_T^{(1)}(X) \neq Y \right\} -
    \mathbb{P}\left\{f_T^{(2)}(X) \neq Y \right\}
    \right]
    < 0.
    \label{equ:fj_te}
\end{align}
\end{thm}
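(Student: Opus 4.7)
The plan is to reduce everything to the Bayes classifier and then invoke the corollary of Lemma~\ref{lem:f_f_G_Y}. Recall that under the independence hypothesis, for any classifier $f$,
\begin{align*}
\mathbb{P}\{f(X)\neq Y\} \;=\; q \;+\; (1-2q)\,\mathbb{P}_X\{f(X)\neq f^B(X)\}.
\end{align*}
Applying this to $f_T^{(j)}$ and $f_{T_p}^{(j)}$ and subtracting gives
\begin{align*}
\mathbb{P}\{f_T^{(j)}(X)\neq Y\} - \mathbb{P}\{f_{T_p}^{(j)}(X)\neq Y\}
\;=\;
(1-2q)\left[\mathbb{P}_X\{f_T^{(j)}(X)\neq f^B(X)\} - \mathbb{P}_X\{f_{T_p}^{(j)}(X)\neq f^B(X)\}\right].
\end{align*}
Since $q \le 1/2$, and the case $q=1/2$ makes the desired conclusion vacuous (all test errors equal $q$), I will tacitly work under $q<1/2$, so $1-2q>0$.

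Next I would bound the discrepancy between the disagreement set of $f_T^{(j)}$ with $f^B$ and the ION. By the elementary symmetric-difference inequality $\{a\neq c\} \subseteq \{a\neq b\}\cup\{b\neq c\}$ applied pointwise with $a=f_T^{(j)}(X)$, $b=f_{T_p}^{(j)}(X)$, $c=f^B(X)$, one obtains
\begin{align*}
\Bigl|\,\mathbb{P}_X\{f_T^{(j)}(X)\neq f^B(X)\} - \mathrm{ION}(\mathcal{M}^{(j)},T)\,\Bigr|
\;\le\;
\mathbb{P}_X\{f_{T_p}^{(j)}(X)\neq f^B(X)\}.
\end{align*}
The consistency hypothesis \eqref{equ:fj_consistency}, combined with the same identity from Lemma~\ref{lem:f_f_G_Y} applied to $f_{T_p}^{(j)}$, forces $\mathbb{P}_X\{f_{T_p}^{(j)}(X)\neq f^B(X)\}\to 0$ as $n\to\infty$, for $j=1,2$.

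Combining the two displays, I can write
\begin{align*}
\mathbb{P}\{f_T^{(1)}(X)\neq Y\} - \mathbb{P}\{f_T^{(2)}(X)\neq Y\}
\;=\;
(1-2q)\bigl[\mathrm{ION}(\mathcal{M}^{(1)},T) - \mathrm{ION}(\mathcal{M}^{(2)},T)\bigr] + R_n,
\end{align*}
where $R_n$ collects the $\mathbb{P}_X\{f_{T_p}^{(j)}(X)\neq f^B(X)\}$ terms for $j=1,2$ (plus the same quantities arising from the difference of the $\mathbb{P}\{f_{T_p}^{(j)}(X)\neq Y\}$ terms), so $R_n \to 0$. Taking $\limsup$ and using \eqref{equ:fj_ION} with $1-2q>0$ yields a strictly negative upper bound, giving \eqref{equ:fj_te}.

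The only real obstacle is book-keeping: carefully handling both pieces so that the $o(1)$ remainders from $f_{T_p}^{(1)}$ and $f_{T_p}^{(2)}$ do not swallow the strict inequality on the $\limsup$. This is fine because the ION-difference assumption is a strict $\limsup$ bound, so it dominates an additive $o(1)$ term.
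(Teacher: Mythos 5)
Your proposal is correct and takes essentially the same route as the paper: both arguments use Lemma~\ref{lem:f_f_G_Y} to turn the consistency hypothesis \eqref{equ:fj_consistency} into $\mathbb{P}_X\{f_{T_p}^{(j)}(X)\neq f^B(X)\}\to 0$, then show that $\mathbb{P}_X\{f_T^{(j)}(X)\neq f^B(X)\}$ and $\mathrm{ION}(\mathcal{M}^{(j)},T)$ differ by $o(1)$ (your symmetric-difference/triangle inequality is just a compact repackaging of the paper's explicit $A_n^{(j)}$/$B_n^{(j)}$ event decomposition), and finally transfer the strict $\limsup$ bound \eqref{equ:fj_ION} through the affine identity $\mathbb{P}\{f(X)\neq Y\}=q+(1-2q)\mathbb{P}_X\{f(X)\neq f^B(X)\}$. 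Your explicit restriction to $q<1/2$ only makes visible an assumption the paper also uses tacitly (its equivalence between \eqref{equ:fj_consistency} and the displayed limit likewise needs $1-2q>0$), so there is no gap.
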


\begin{proof}
To begin with,
by Lemma~\ref{lem:f_f_G_Y},
\eqref{equ:fj_consistency} is equivalent to
\begin{align}
\lim_{n \rightarrow \infty}\mathbb{P}_X
\left\{f_{T_p}^{(j)}(X) \neq f^B(X)\right\} = 0,
\quad
j=1, 2,
\label{equ:fj_Tp_fB_0}
\end{align}
Then,
\begin{align*}
&\ 
\mathbb{P}_X\left\{f_T^{(1)}(X) \neq f^B(X) \right\}
-
\mathbb{P}_X\left\{f_T^{(2)}(X) \neq f^B(X) \right\}
\\
&\ 
-
\left(
\mathbb{P}_X\left\{f_T^{(2)}(X) \neq f^B(X), f_T^{(2)}(X) \neq f_{T_p}^{(2)}(X) \right\}
+
\mathbb{P}_X\left\{f_T^{(2)}(X) \neq f^B(X), f_T^{(2)}(X) = f_{T_p}^{(2)}(X) \right\}
\right)
\\
=&\ 
\mathbb{P}_X\left\{f_T^{(1)}(X) \neq f^B(X), f_T^{(1)}(X) \neq f_{T_p}^{(1)}(X) \right\}
-
\mathbb{P}_X\left\{f_T^{(2)}(X) \neq f^B(X), f_T^{(2)}(X) \neq f_{T_p}^{(2)}(X) \right\}
\\
&\ +
\mathbb{P}_X\left\{f_T^{(1)}(X) \neq f^B(X), f_T^{(1)}(X) = f_{T_p}^{(1)}(X) \right\}
-
\mathbb{P}_X\left\{f_T^{(2)}(X) \neq f^B(X), f_T^{(2)}(X) = f_{T_p}^{(2)}(X) \right\}
\\
=&\ 
\mathbb{P}_X\left\{f_{T_p}^{(1)}(X) = f^B(X), f_T^{(1)}(X) \neq f_{T_p}^{(1)}(X) \right\}
-
\mathbb{P}_X\left\{f_{T_p}^{(2)}(X) = f^B(X), f_T^{(2)}(X) \neq f_{T_p}^{(2)}(X) \right\}
\\
&\ +
\mathbb{P}_X\left\{f_{T_p}^{(1)}(X) \neq f^B(X), f_T^{(1)}(X) = f_{T_p}^{(1)}(X) \right\}
-
\mathbb{P}_X\left\{f_{T_p}^{(2)}(X) \neq f^B(X), f_T^{(2)}(X) = f_{T_p}^{(2)}(X) \right\}
\\
=:&\ 
A^{(1)}_n-A^{(2)}_n+B^{(1)}_n-B^{(2)}_n.
\end{align*}
By \eqref{equ:fj_Tp_fB_0}, we have
\begin{align*}
\lim_{n\rightarrow\infty}
\mathbb{P}_X\left\{f_{T_p}^{(j)}(X) \neq f^B(X), f_T^{(j)}(X) = f_{T_p}^{(j)}(X) \right\}
=
0,
\quad
j=1, 2,
\end{align*}
and thus $\lim\limits_{n\rightarrow\infty} B^{(j)}_n =0, j =1, 2$.
By \eqref{equ:fj_Tp_fB_0}, we also have
\begin{align*}
\lim_{n\rightarrow\infty}
\left(
A^{(j)}_n
-
\mathbb{P}_X\left\{f_T^{(j)}(X) \neq f_{T_p}^{(j)}(X) \right\}
\right)
=
0, 
\quad
j=1,2,
\end{align*}
so, by \eqref{equ:ION}, one can show that $A^{(j)}_n$ and $\mathrm{ION}(\mathcal{M}^{(j)}, T)$ share the same limit, $j=1,2$.
Therefore,
\begin{align*}
\lim_{n\rightarrow\infty}
\left[\bigg.
\left(\Big.
\mathbb{P}_X\left\{f_T^{(1)}(X) \neq f^B(X) \right\}
-
\mathbb{P}_X\left\{f_T^{(2)}(X) \neq f^B(X) \right\}
\right)
-
\left(\Big.
\mathrm{ION}(\mathcal{M}^{(1)}, T) - \mathrm{ION}(\mathcal{M}^{(2)}, T)
\right)
\right]
=
0
.
\end{align*}

Because of \eqref{equ:fj_ION}, we have
\begin{align*}
    \limsup_{n\rightarrow\infty}
    \left[\bigg.
    \mathbb{P}_X\left\{f_T^{(1)}(X) \neq f^B(X) \right\} -
    \mathbb{P}_X\left\{f_T^{(2)}(X) \neq f^B(X) \right\}
    \right]
    < 0.
\end{align*}
Further, by Lemma~\ref{lem:f_f_G_Y},
\eqref{equ:fj_te} holds.
\end{proof}

We interpret Theorem~\ref{thm:ION_lower_to_error_lower_limit}.
First,
\eqref{equ:fj_consistency} is a very weak condition.
It assumes the two methods are consistent on the purified training set $T_p$.
In fact, many classical methods have been proved to be consistent.
Furthermore, because there is no noise point in $T_p$, the consistency on $T_p$ is easier to achieve than on $T$.
Even the notoriously easy-to-overfit method, 1NN, is consistent in such a good training set $T_p$ but not necessarily consistent in $T$, according to the Cover-Hart inequality \citep{cover1967_1nn}.

Second, 
\eqref{equ:fj_ION} is about the property of some methods regarding a certain training set.
Instead of subjectively describing the property of the methods,
it measures the influence of the noise points in the particular training set objectively.

Third, according to Theorem~\ref{thm:ION_lower_to_error_lower_limit}, 
the decrease of ION implies that the method is minimizing the influence of the noise points and thus enhancing the generalization ability.
It means that, for most methods,
the ION is a good indicator of the test error.

Fourth, there is no natural conflict between interpolation and lower ION. For a classifier, the purpose of interpolating is to take all information contained in the signal points as much as possible, while the goal of lowering ION is to reduce the impact attributed to the noise points. 

In order to have a concrete understanding of Proposition~\ref{prop:ION_lower_to_error_lower} and Theorem \ref{thm:ION_lower_to_error_lower_limit},
we give a 2-dim toy example. First, the population is 
\begin{align*}
    \mathbb{P}\{Y=1|X=\bm{x}\} = 
    \begin{cases} 
    0.1, & \text{ if } x_1 < 0,\\
    0.9, & \text{ if } x_1 \geq 0,
    \end{cases}
\end{align*}
where $X$ is uniformly distributed in $(-1, 1]^2$.
In other words, only the first dimension of $X$ is relevant to $Y$, while the second dimension contributes no information.
One can easily solve the Bayes classifier $f^B(\bm{x})=\mathrm{sign}\{x_1\}$ and the Bayes error $q = 0.1$.

Second, we randomly generate a training set $T$ with a size $n=500$, as in Fig. \ref{fig:drcT}.
The training set $T$ is composed of $460$ signal points and $40$ noise points.
Roughly speaking, the yellow triangles to the left and the blue circles to the right are all noise points.
Particularly, on the left and bottom side of the graph in Fig. \ref{fig:drcT}, there is a solid triangle $\bm{x}_{i_0} = (-0.79, -0.41)$, which is the noise point that would be discussed later.

\begin{figure}[htpb]
\centering
\includegraphics[width=0.45\textwidth]{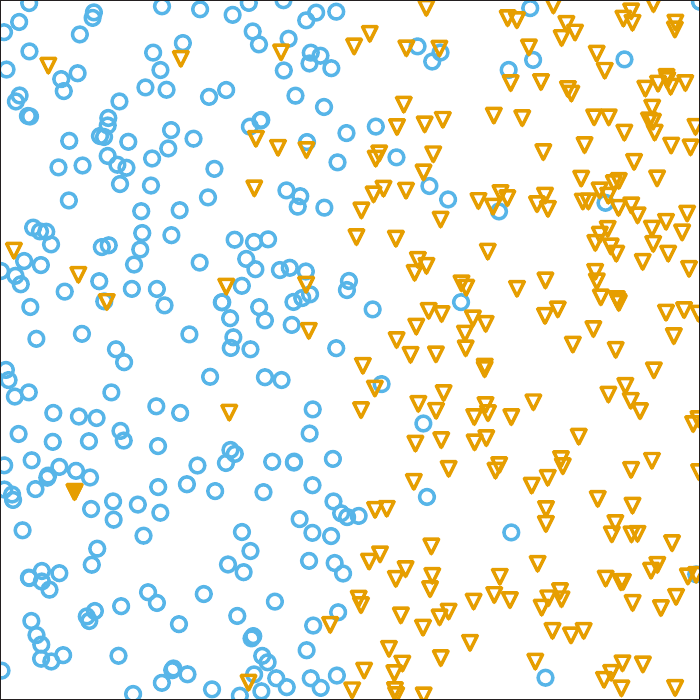}
\caption{The training set $T$.\label{fig:drcT}}
\end{figure}

Third, we apply two methods, $\mathcal{M}^{(1)}$ (1NN) and $\mathcal{M}^{(2)}$ (AdaBoost\footnote{To be more specific, for the AdaBoost we used, the base learners are decision trees with a maximum depth 4, and the number of iterations is 50.}), to generate interpolating classifiers on the training set $T$ respectively. Fig. \ref{fig:drcTT}(\subref{subfig:drc1NN}) is the classifier $f^\text{1NN}_T$ generated by method 1NN, while \ref{fig:drcTT}(\subref{subfig:drcAda}) $f^\text{AdaBoost}_T$.
The purple vertical dotted line ($x_1=0$) is the watershed of $f^B$, while the black solid lines are the decision boundaries of $f^{\mathcal{M}^{(\cdot)}}_T$.
Both classifiers are interpolating, which means that $f_T^{\mathcal{M}^{(\cdot)}}(\bm{x}_i) = y_i$, $\forall i$, even though they are from different methods.

\begin{figure}[htpb]
\centering
\subcaptionbox{$f^\text{1NN}_T$\label{subfig:drc1NN}}
{\includegraphics[width=0.45\textwidth]{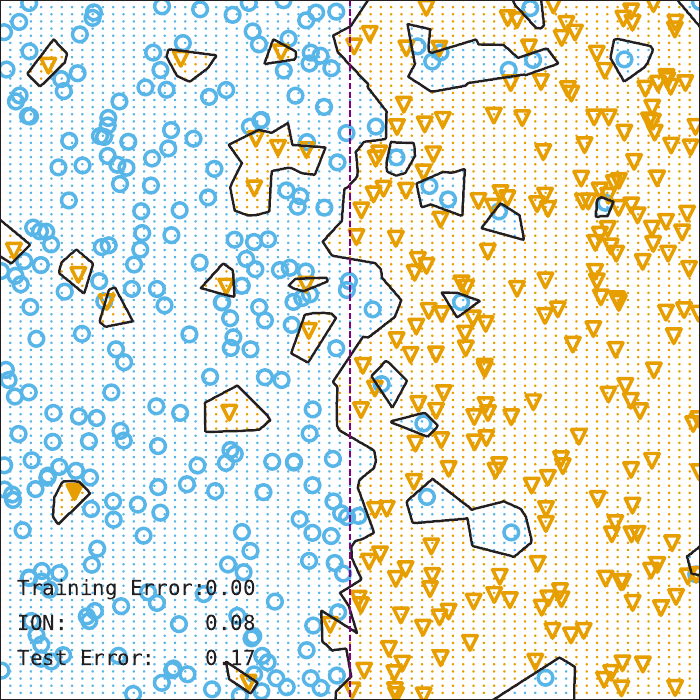}}
\subcaptionbox{$f^\text{AdaBoost}_T$\label{subfig:drcAda}}
{\includegraphics[width=0.45\textwidth]{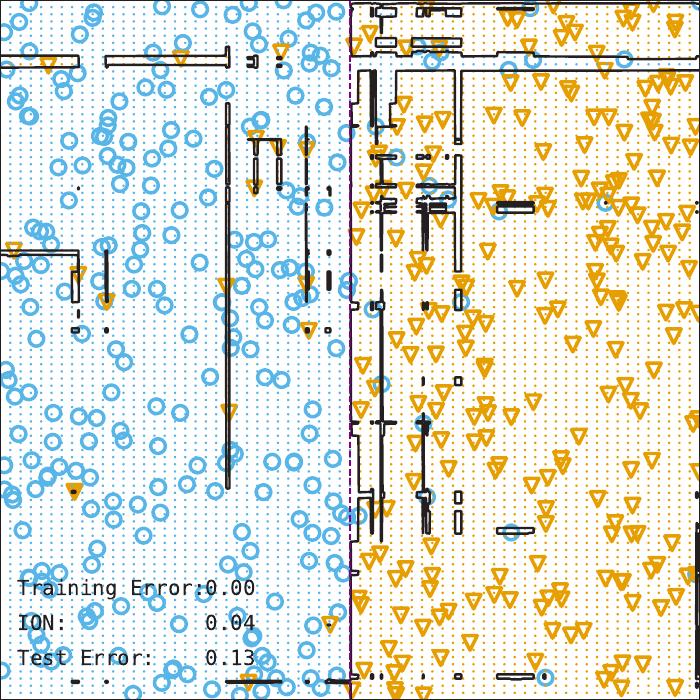}}
\caption{The training set $T$ and the classifiers: AdaBoost has lower ION than 1NN.\label{fig:drcTT}}
\end{figure}

Fourth, we argue that the ION of AdaBoost is lower than that of 1NN on the training set $T$.
the classifiers in Fig. \ref{fig:drcTT} are different: The decision boundary of 1NN in Fig. \ref{fig:drcTT}(\subref{subfig:drc1NN}) is smooth and natural, while that of AdaBoost in Fig. \ref{fig:drcTT}(\subref{subfig:drcAda}) is sharp and uneven.
However, we argue that the sharp and uneven is better than the smooth and natural in the sense of minimizing and localizing the influence of the noise points.
For the isolated noise points, 
the regions surround them in \ref{fig:drcTT}(\subref{subfig:drcAda}) are smaller and narrower than that in \ref{fig:drcTT}(\subref{subfig:drc1NN}).
In detail, we focus on the particular noise point lies at about $\bm{x}_{i_0} = (-0.79, -0.41)$, which is the solid triangle.
The area around $\bm{x}_{i_0}$ of $f^\text{AdaBoost}_T$ in \ref{fig:drcTT}(\subref{subfig:drcAda}) is very small,
while that of $f^\text{1NN}_T$ in \ref{fig:drcTT}(\subref{subfig:drc1NN}) is a big irregular polygon---the influence of the noise point $\bm{x}_{i_0}$ seems to be lower for AdaBoost than 1NN.\footnote{It is noteworthy that the influence of the noise points are acting jointly rather than individually, but it does not matter in this heuristic case.}

Last, we calculate the ION and the test error, summarized in Table~\ref{tab:toy example}, where $\mathrm{ION}(\text{1NN}, T) \approx 0.08$ and $\mathrm{ION}(\text{AdaBoost}, T) \approx 0.04$, 
$\mathbb{P}\left\{f_T^{\text{1NN}}(X) \neq Y \right\} \approx 0.17$ and 
$\mathbb{P}\left\{f_T^{\text{AdaBoost}}(X) \neq Y \right\} \approx 0.13$.
We can observe that the results are in line with our theorem, i.e., the lower the ION, the lower the test error.

\begin{table}[htbp]
  \centering
  \caption{The ION and the test error.}
    \begin{tabularx}{.9\textwidth}{Y|Y|Y|Y}
    \hline
    Method & ION & Training Error & Test Error \\
    \hline
    $f^B$ & - & - & 0.10 \\
    \hline
    $\mathcal{M}^{(1)}=\mathrm{1NN}$ & 0.08 & 0 & 0.17 \\
    \hline
    $\mathcal{M}^{(2)}=\mathrm{AdaBoost}$ & 0.04 & 0 & 0.13 \\
    \hline
    \end{tabularx}
  \label{tab:toy example}
\end{table}

Overall, this section connects the ION and the test error.
Both the theoretical derivation and the toy example of simulation demonstrate the importance of ION.
Particularly, the toy example explains why 1NN is easy to overfit, while AdaBoost not.
However, AdaBoost is only a general term for a class of methods, since both the base learners and the number of iterations need to be specified. By choosing different kinds of base learners and different numbers of iterations, we can generate a tremendous amount of specific methods. 
In Section \ref{subsection: ION and AdaBoost}, we take a close look at the performance of AdaBoost with different hyperparameters from our new perspective: ION.

\subsection{The ION and AdaBoost}
\label{subsection: ION and AdaBoost}
AdaBoost mainly has two hyperparameters.
One of them is the complexity of the base learners.
The decision trees are one of the most popular base learners of AdaBoost, which is the classical base learner in the monograph \citet{book:esl}.
In this paper, we use the maximum depth of decision trees to indicate the complexity of the base learners. The deeper the decision trees, the more complex the base learners, and the more complex the AdaBoost.
The other is the number of iterations, which is the number of the base learners added in total.
The higher the number of iterations, the more complex the AdaBoost.

This section corroborates the conclusion of \citet{wyner2017AdaBoost_RandomForest} with our newly defined concept ION. 
We show their conclusion that AdaBoost based on large decision trees without early stopping is better. We want to show that AdaBoost generates interpolating classifiers, and both the ION and the test error decrease as the depth of the base learners and the number of iterations increase. Instead of comparing AdaBoost with 1NN, we digest AdaBoost itself with different parameters in details via high-dimensional population of simulation.

The simulation population is
\begin{align*}
    \mathbb{P}\{Y=1|X=\bm{x}\} = 
    \begin{cases} 
    0.1, & \text{ if } x_1\cdot x_2 \cdot x_3 < 0,\\
    0.9, & \text{ if } x_1\cdot x_2 \cdot x_3 \geq 0,
    \end{cases}
\end{align*}
where
$X$ is uniformly distributed in $(-1, 1]^6$.
We randomly generate a training set $T$ with $n=500$, 
and compare the results of AdaBoost with different hyperparameters.

In order to explain the reason why AdaBoost without early stopping might be better, we compare the results of AdaBoost with different numbers of iterations $m = 1, 2,\dots, 250$ but the same maximum depth of the base decision trees. The maximum depth is set as 5. 
We denote the corresponding classifiers by $f_T^{(m)}, m=1,2, \dots, 250$.

The results are in Fig. \ref{fig:Ada_M}.
The $x$-axis in the figure represents the number of iterations $m$. 
We clarify the three lines in detail.
The green dashed line is the training error of $f^{(m)}_T$ on its training set $T$,
the red dashed-dotted line is the test error of $f^{(m)}_T$,
and the blue solid line is the ION of AdaBoost in the training set: $\mathrm{ION}(\mathcal{M}^{(m)}, T)$.
All the three lines decrease sharply when $n < 20$.
When $n \geq 20$, the training error remains $0$, but the test error and ION keep decreasing.

\begin{figure}[htpb]
\centering
\includegraphics[width=.9\textwidth]{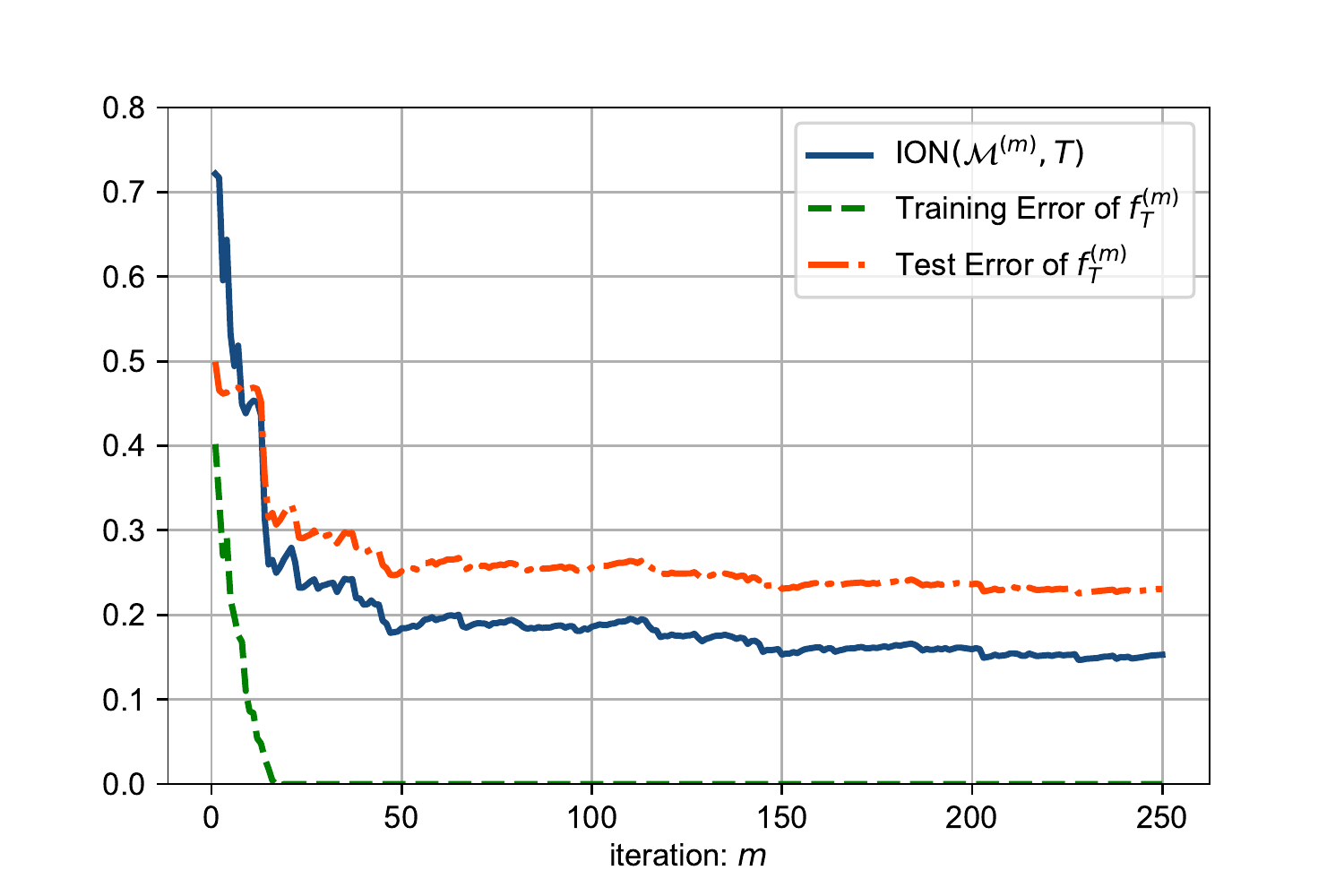}
\caption{The performance of AdaBoost regarding to $m=1,2,\dots,250$. \label{fig:Ada_M}}
\end{figure}

From Fig. \ref{fig:Ada_M}, we have the following observations.
First,
AdaBoost is minimizing the influence of the noise points.
When $m \geq 20$, the test error decreases but the training error remains $0$.
A natural question is, what is AdaBoost doing?
There are many explanations.
\citet{wyner2017AdaBoost_RandomForest} believed that AdaBoost is self-averaging and generating a ``spiked-smooth'' classifier by minimizing the the influence of the noise points.
We corroborate their work with the blue solid line ION.
When $m \geq 20$, 
although the training error remains $0$, 
the ION continues to decrease,
which reflects the decrease of the influence of the noise points.
Thus, as the number of iterations increases, AdaBoost keeps interpolating, and simultaneously minimizes the influence of the noise points.
Second,
the iteration of AdaBoost can be divided into two stages:
The first stage is the sharp decrease of the training error ($m < 20$),
and the second stage is the decrease of ION ($m\geq 20$).
The first stage can be considered as the formation of the rough skeleton of the classifier,
while the second stage can be treated as the process of the details with the minimization of the influence of the noise points.\footnote{However, the two stages can not be divided arbitrarily, because ION may also play a role in the first stage.}

For another, 
we show that AdaBoost based on deep/large decision trees is better, and explain it by ION.
Specifically, we apply AdaBoost based on different decision trees but the same number of iterations $m=250$,
where the base decision trees have different maximum depths from $1$ to $8$.
In other words, the number of the terminal leaves of the base decision trees varies from $2$ to $256$.
We denote the corresponding classifiers by $f^{(j)}_T, j=1,2, \dots, 8$.

The results are presented in Fig. \ref{fig:Ada_depth}. 
The $x$-axis in the figure represents the maximum depth, i.e., $j$ for $f^{(j)}_T$.
The three lines are the same as those in Fig. \ref{fig:Ada_M}, and so are the interpretations.

\begin{figure}[htpb]
\centering
\includegraphics[width=.9\textwidth]{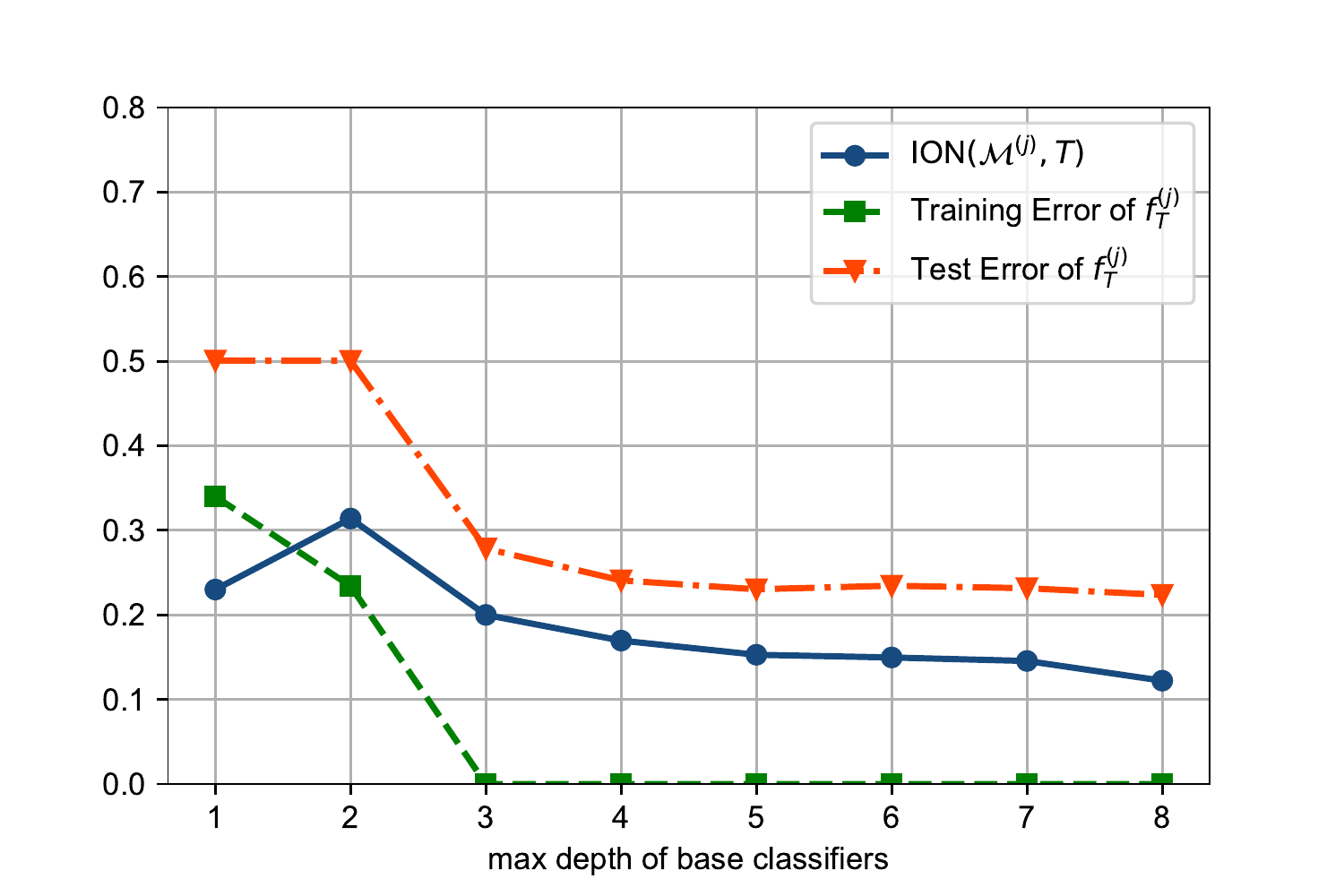}
\caption{The performance of AdaBoost regarding to $j=1,2,\dots,8$.\label{fig:Ada_depth}}
\end{figure}

Overall, AdaBoost based on large decision trees without early stopping is better, 
which can be explained as the decrease of the ION.
Given the condition that the training error is $0$,
the influence of the noise points decreases as the depth of the base decision trees and the number of iterations increase.

Now, return to the main line of the paper, 
we show that AdaBoost would not overfit even interpolating, 
when digging out complex structures of factors in constructing equal-weighted portfolios.
As it was emphasized by \citet[p. 15]{de2018ml}, 
the linear methods are awfully simplistic and useless, and would ``fail to recognize the complexity of the data''.
The academia and industry shift their focus to the non-linear ones.
There are a tremendous amount of machine learning methods applied in various data and fields in finance.
Many of the machine learning methods suffer from the overfitting and the low interpretation.
However, AdaBoost is not heavily affected by them as illustrated above.

In Section~\ref{sec:empirical}, 
we give empirical studies about specific factors or strategies, and prove the advantage of AdaBoost in portfolio management. But we want to clarify what kind of non-linear information can AdaBoost dig out first. 

\section{Base learners of AdaBoost}\label{sec:XOR_counter}
AdaBoost is a boosting method. It boosts the performance of a series of base learners, or ``weak classifiers''. People usually choose shallow trees (such as ``stumps'', i.e., decision trees with only one layer) as base learners since they are ``weak'' enough and thus can avoid overfitting.

However, in many fields, especially in the area of portfolio management, using stumps as base learners may not capture the nature of the population, since the population is usually rather complicated. \cite{wyner2017AdaBoost_RandomForest} proposed that the deep and large trees will allow the base learners to interpolate the data without overfitting, and it is a better choice to use deep trees as base learners. We have already shown the result mathematically in Section \ref{subsection: ION and AdaBoost} from the perspective of the ION. 

In this section, we discuss the shortcomings of AdaBoost based on stumps. We first show that stumps cannot deal with the ``XOR'' classification problem. Then, we generalize the result and demonstrate that AdaBoost based on stumps cannot deal with populations without ``comonotonicity''. These kinds of populations are common in finance, since the investment activities in the financial world are usually rather complicated and interactive.

\subsection{The ``XOR'' population}
\label{subsection: XOR}
In this section, we use a toy example to show that the shallow trees (especially stumps) are not always capable to capture the patterns of the population. We introduce the Boolean operator ``exclusive OR'' (XOR) first. 

\begin{defn}[2-XOR]
The 2-XOR function is defined as $\mathrm{XOR}_2:\mathbb{R}^2 \to \{\pm1\}$ such that
$$
\mathrm{XOR}_2(x_1, x_2) =
\begin{cases}
  -1, & \mbox{if } x_1\cdot x_2 \geq 0, \\
  1, & \mbox{if } x_1\cdot x_2 < 0.
\end{cases}
$$
\end{defn}

\begin{defn}[$k$-XOR]  For $k>2$, the $k$-XOR function, denoted as $\mathrm{XOR}_k$, is defined recursively as
$$
\mathrm{XOR}_{k}(x_1, x_2, \dots, x_k)=\mathrm{XOR}_2\left(\mathrm{XOR}_{k-1}(x_1, x_2, \dots, x_{k-1}),x_k\right).
$$
\end{defn}

\begin{figure}[htpb]
\centering
\subcaptionbox{2-XOR}
{\includegraphics[width=0.48\textwidth]{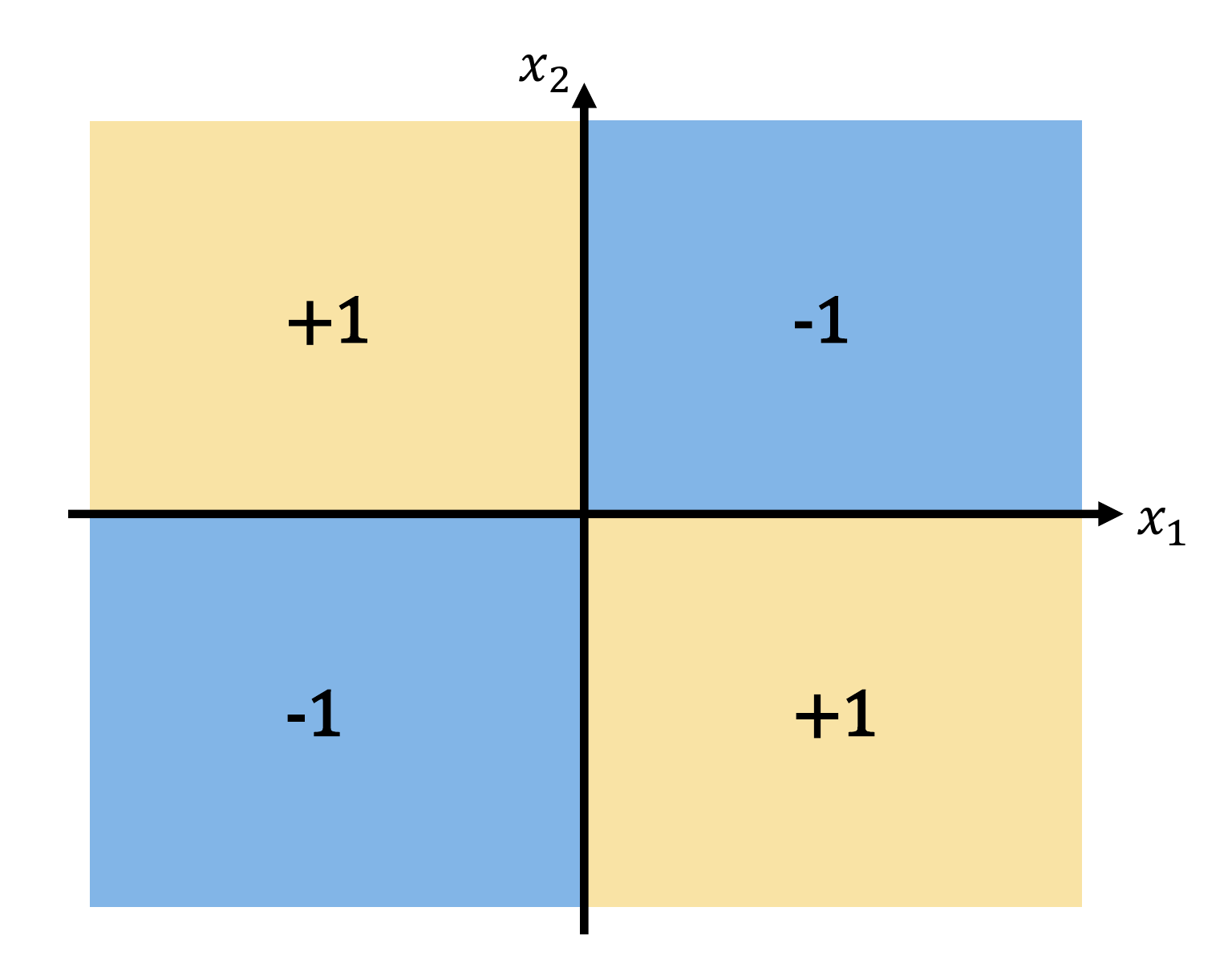}}
\subcaptionbox{3-XOR}
{\includegraphics[width=0.48\textwidth]{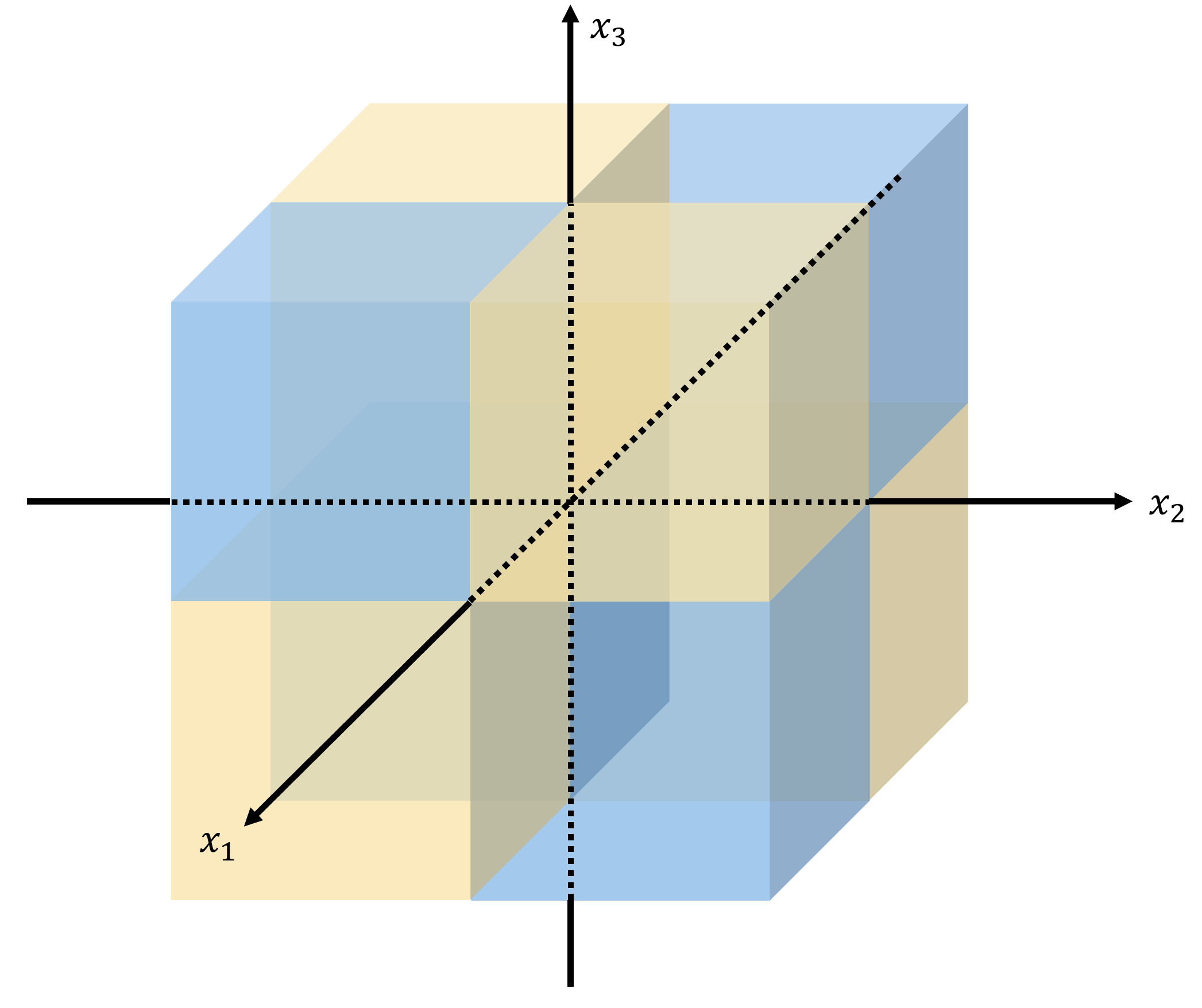}}
\caption{\label{fig:intuition_of_xor} Intuitions of 2-XOR and 3-XOR functions.}
\end{figure}

The Boolean operator $k$-XOR is an important function in computer science (for instance, the parity check), and it is also a classical example in \cite{book:esl}. It can also bring insights into portfolio management, since the $k$-XOR can characterize the interaction among different factors. There are many studies focusing on the interaction among various factors \citep{asness2018size}. Fig. \ref{fig:intuition_of_xor} are intuitive illustrations of the 2-XOR and the 3-XOR functions. The outputs are not the same in adjacent quadrants (or octants), which is a common pattern of interaction.

Now we show that the stumps cannot deal with the classification problems with the Bayes classifier $f^B = \mathrm{XOR}_k$, even in the case that the Bayes error is 0. For instance, if we use a stump classifier $f$ to classify the 2-XOR function, we can easily show that $\mathbb{P}_{x_1, x_2}\left\{f(x_1,x_2)\neq \mathrm{XOR}_2(x_1,x_2)\right\}$ is always $50\%$ no matter how the stump is trained. That is because, a stump is equivalent to a partition of $\mathbb{R }^2$ along the direction of one axis. After the partition, both half-spaces still contain values of 1 (accounts for $50\%$) and $-1$ (50\%), which leads to a test error $50\%$. 

The conclusion can also be generalized to high-dimensional spaces. Let $f^{(\leq k)}$ denote a decision tree whose depth is no more than $k$, and $f^{(k)}$ denote a decision tree whose depth is $k$. We have the following result in Theorem \ref{thm:xor}. 
\begin{thm} \label{thm:xor}
Applying a decision tree $f^{(\leq k)}$ on the $(k+1)$-$\mathrm{XOR}$ classification problem will always lead to $\mathbb{P}_X\left\{f^{(\leq k)}(X) = \mathrm{XOR}_{k+1}(X)\right\}=50\%$, where $X=(x_1,\dots,x_{k+1})$.
\end{thm}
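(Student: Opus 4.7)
The plan is to exploit the pigeonhole principle together with the reflection symmetry of the XOR function and (implicitly) of the distribution of $X$.

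First I would observe that any decision tree $f^{(\leq k)}$ induces a partition of the input space into finitely many leaves $L_1, \dots, L_N$, where each leaf corresponds to a root-to-leaf path of length at most $k$. Each leaf $L$ is a rectangular region determined by at most $k$ axis-aligned constraints of the form $x_i \leq c$ or $x_i > c$. Since there are $k+1$ coordinates but at most $k$ split coordinates on any path, by the pigeonhole principle, for each leaf $L$ there exists at least one index $j(L) \in \{1,\dots,k+1\}$ that does not appear in any constraint defining $L$. In particular $L$ is invariant under the reflection $\phi_{j(L)}:(x_1,\dots,x_{k+1}) \mapsto (x_1,\dots,-x_{j(L)},\dots,x_{k+1})$.

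Next I would establish the key sign-flip property of XOR: for every coordinate $j$ and almost every $x$, $\mathrm{XOR}_{k+1}(\phi_j(x)) = -\mathrm{XOR}_{k+1}(x)$. This follows by a short induction on $k$ from the recursive definition, noting that the exceptional set $\{x_j = 0\}$ (the reason why the non-strict inequality in $\mathrm{XOR}_2$ is harmless) has Lebesgue measure zero.

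Combining the two, fix a leaf $L$ with constant label $c_L \in \{\pm 1\}$, and split $L$ into $L_+ = \{x\in L:\mathrm{XOR}_{k+1}(x)=+1\}$ and $L_- = \{x\in L:\mathrm{XOR}_{k+1}(x)=-1\}$. The map $\phi_{j(L)}$ is a measure-preserving bijection from $L_+$ onto $L_-$, since $\phi_{j(L)}(L)=L$ and the distribution of $X$ is symmetric under each coordinate reflection (as holds for the uniform distribution on a centered hypercube used throughout this paper). Therefore $\mathbb{P}(X\in L_+) = \mathbb{P}(X\in L_-) = \mathbb{P}(X\in L)/2$, and the constant prediction $c_L$ matches $\mathrm{XOR}_{k+1}$ on exactly one of the two halves, giving $\mathbb{P}\{X\in L,\ f^{(\leq k)}(X)=\mathrm{XOR}_{k+1}(X)\}=\mathbb{P}(X\in L)/2$. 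Summing over the disjoint leaves yields $\mathbb{P}\{f^{(\leq k)}(X)=\mathrm{XOR}_{k+1}(X)\}=1/2$.

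The main obstacle is conceptual rather than computational: one must make explicit the symmetry assumption on the distribution of $X$ (not stated in the theorem but implicit from the simulation setup of Section~\ref{subsection: ION and AdaBoost}), and verify that the pigeonhole step genuinely produces an axis along which the leaf is unconstrained, rather than merely a coordinate that has been split once but left with an unbounded half-line. Once these points are handled, the reflection argument is immediate.
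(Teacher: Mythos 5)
Your proof is correct, and it takes a genuinely different route from the paper's. The paper argues by induction on the tree depth: it conditions on the root split (say on $x_1$ at value $c$), uses the recursive structure of $\mathrm{XOR}_{k+1}$ to reduce each subtree to a $\mathrm{XOR}_k$ problem on $X_{[-1]}$, and invokes the inductive hypothesis that depth-$(k-1)$ trees achieve exactly $50\%$ there; notably, the paper's argument as written assumes each coordinate is split at most once along a path and defers the general case to ``the total probability formula.'' Your argument instead works leaf by leaf: pigeonhole guarantees each leaf of a depth-$\leq k$ tree is unconstrained in some coordinate $j(L)$, the reflection $\phi_{j(L)}$ preserves the leaf and the (coordinate-symmetric) distribution while flipping $\mathrm{XOR}_{k+1}$ almost everywhere, so the constant leaf prediction is correct on exactly half the leaf's mass, and summing gives $1/2$. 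This buys you several things the paper's proof does not deliver cleanly: repeated splits on the same coordinate are handled automatically, the measure-zero boundary sets $\{x_j=0\}$ are dealt with explicitly, the needed distributional hypothesis (invariance under single-coordinate reflections, e.g.\ uniform on a centered hypercube, which the theorem statement leaves implicit but which both proofs in fact require) is made explicit, and you actually prove the stronger conditional statement that the accuracy is exactly $1/2$ on every leaf. What the paper's induction buys in exchange is continuity with its earlier stump discussion and with the recursive definition of $\mathrm{XOR}_k$, mirroring the tree's recursive structure. Both arguments are sound; yours is the more general and self-contained of the two.
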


\begin{proof}
We prove the theorem by induction. The case of $k=1$ has already been proved. Now we assume that our conclusion holds for $f^{(\leq k-1)}$. We want to prove that it also holds for $f^{(k)}$.

Without loss of generality, we assume that the splitting variable of $f^{(k)}$'s first layer is the 1st feature $x_1$, then $$f^{(k)}=\mathbf{1}_{\{x_1\leq c\}}f_1^{(k-1)}+\mathbf{1}_{\{x_1> c\}}f_2^{(k-1)},$$ where $f_1^{(k-1)}$ and $f_2^{(k-1)}$ represent the left subtree and the right subtree of $f^{(k)}$'s top node respectively, and $c$ is the splitting value. Let $X_{[-1]}=(x_2,\dots,x_{k+1})$, then
\begin{eqnarray*}
 \mathbb{P}_X\left\{f^{(k)}(X) = \mathrm{XOR}_{k+1}(X)\right\} 
&=& \mathbb{P}_X\left\{f_1^{(k-1)}(X)=\mathrm{XOR}_{k+1}(X)\Big|x_1\leq c\right\}\mathbb{P}_X\{x_1\leq c\}\\
&& +\mathbb{P}_X\left\{f_2^{(k-1)}(X)=\mathrm{XOR}_{k+1}(X)\Big|x_1>c\right\}\mathbb{P}_X\{x_1>c\}.
\end{eqnarray*}
Assuming $c>0$ without loss of generality, then we have
\begin{eqnarray*}
&&\mathbb{P}_X\left\{f_1^{(k-1)}(X)=\mathrm{XOR}_{k+1}(X)\Big|x_1\leq c\right\}\\
&=&\mathbb{P}_X\left\{f_1^{(k-1)}(X)=\mathrm{XOR}_{k+1}(X)\Big|x_1\leq c, x_1\leq 0\right\}\mathbb{P}_X\{x_1\leq0\}\\
&& +\mathbb{P}_X\left\{f_1^{(k-1)}(X)=\mathrm{XOR}_{k+1}(X)\Big|0<x_1\leq c\right\}\mathbb{P}_X\{x_1>0\}\\
&=&\mathbb{P}_X\left\{f_1^{(k-1)}(X_{[-1]})=\mathrm{XOR}_{k}(X_{[-1]})\right\}\mathbb{P}_X\{x_1\leq0\}\\
&&+\mathbb{P}_X\left\{-f_1^{(k-1)}(X_{[-1]})=\mathrm{XOR}_{k}(X_{[-1]})\right\}\mathbb{P}_X\{x_1>0\},
\end{eqnarray*}
and 
$$
\mathbb{P}_X\left\{f_2^{(k-1)}(X)=\mathrm{XOR}_{k+1}(X)\Big|x_1>c\right\}=\mathbb{P}_X\left\{-f_2^{(k-1)}(X_{[-1]})=\mathrm{XOR}_{k}(X_{[-1]})\right\}.
$$
Our inductive assumption tells us that, for the $\mathrm{XOR}_{k}$ classification problem, both $f_1^{(k-1)}$ and $f_2^{(k-1)}$ will have a $50\%$ error. Hence, the three probabilities $\mathbb{P}_X\left\{f_1^{(k-1)}(X_{[-1]})=\mathrm{XOR}_{k}(X_{[-1]})\right\}$, $\mathbb{P}_X \left\{-f_1^{(k-1)}(X_{[-1]}) = \mathrm{XOR}_{k}(X_{[-1]})\right\}$ and $\mathbb{P}_X\left\{-f_2^{(k-1)}(X_{[-1]})=\mathrm{XOR}_{k}(X_{[-1]})\right\}$ are all equal to $50\%$. Finally, 
\begin{eqnarray*}
& &\mathbb{P}_X\left\{f^{(k)}(X) = \mathrm{XOR}_{k+1}(X)\right\}\\
&=&\left[50\%\mathbb{P}_X\{x_1\leq0\}+50\%\mathbb{P}_X\{x_1>0\}\right]\mathbb{P}_X\{x_1\leq c\}+50\%\mathbb{P}_X\{x_1>c\}\\
&=&50\%\mathbb{P}_X\{x_1\leq c\}+50\%\mathbb{P}_X\{x_1>c\}\\
&=&50\%.
\end{eqnarray*}
\end{proof}

In the proof above, we suppose that each component of $X$ would be split just only one time. In other words, once the CART algorithm \citep[p. 305]{book:esl} split a decision tree at $x_1$, it will not split at $x_1$ again in other layers. It is just for clarity and conciseness, because one can use the total probability formula to deal with the more complicated situations.

Although the $k$-XOR is a special case that each factor interacts with other factors, it is enough to demonstrate that shallow decision trees (especially for one-layer stumps) may be unable to deal with factors that are not independent of each others. 

\subsection{The population without ``comonotonicity''}\label{ringanddiag}

In this section, we show the shortcomings of AdaBoost based on stumps by introducing the concept of ``comonotonicity''. The conclusion in this section can be regarded as an extension of Section \ref{subsection: XOR}, since the XOR function do not have the property of comonotonicity, as we will discuss later. 

\begin{defn}[Comonotonic Population]
\label{def:comonotonicity}
A population $X \in \mathcal{X}=\mathbb{R}^k, Y \in \mathcal{Y}=\{\pm1\}$ is comonotonic, if its Bayes classifier $f^B$ satisfies: for any constant $c$ and any $i=1,\dots,k$, there exists an $\varepsilon>0$ such that for each $a\in(c-\varepsilon,c)$ and $b\in(c,c+\varepsilon)$, the elements in 
$$
\left\{ f^B(x_1, \dots, x_{i-1}, a, x_{i+1}, \dots, x_k)-f^B(x_1, \dots, x_{i-1}, b, x_{i+1}, \dots, x_k): X_{[-i]} \in \mathbb{R}^{k-1} \right\}
$$
are all non-positive or all non-negative, where $X_{[-i]}=(x_1,\dots,x_{i-1},x_{i+1},\dots,x_k)$.
\end{defn}

To give an intuition of comonotonicity, in Fig. \ref{fig:not_comonotonic}, we give three examples of populations which are not comonotonic. For Fig. \ref{fig:not_comonotonic}(\subref{subfig:XOR_arrow}), \ref{fig:not_comonotonic}(\subref{subfig:ring_arrow}) and \ref{fig:not_comonotonic}(\subref{subfig:diag_arrow}), the decision boundaries of their Bayes classifiers form shapes of an XOR, a ring, and a diagonal band, respectively. The yellow (light) region takes a value of $+1$, and the blue (dark) region $-1$. Note that, in each figure, there both exist arrows from values of $-1$ to $+1$, and arrows from values of $+1$ to $-1$. These arrows tell us that the populations are not comonotonic.

\begin{figure}[htpb]
\centering
\subcaptionbox{\label{subfig:XOR_arrow}XOR}
{\includegraphics[width=0.32\textwidth]{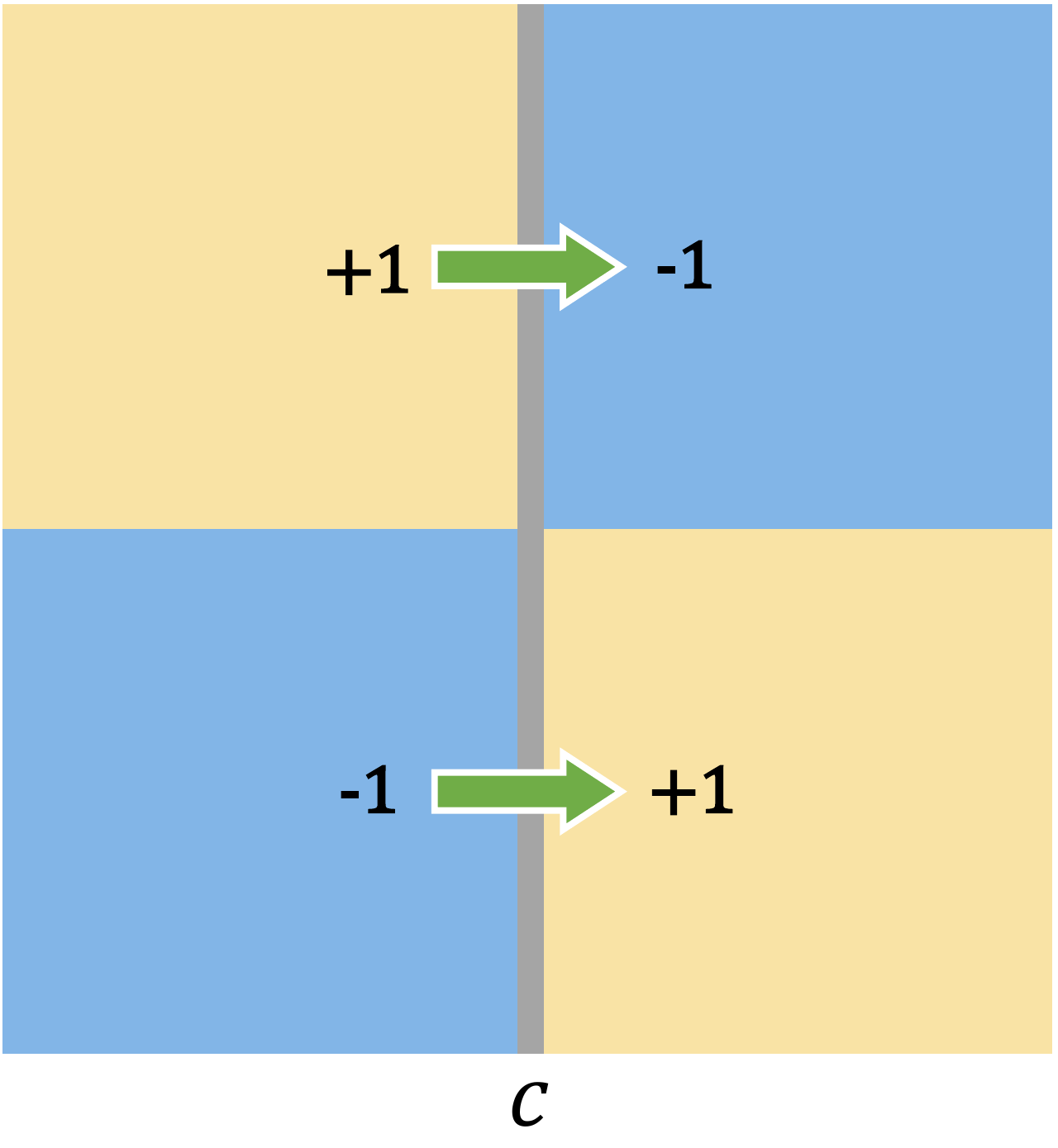}}
\subcaptionbox{\label{subfig:ring_arrow}Ring}
{\includegraphics[width=0.32\textwidth]{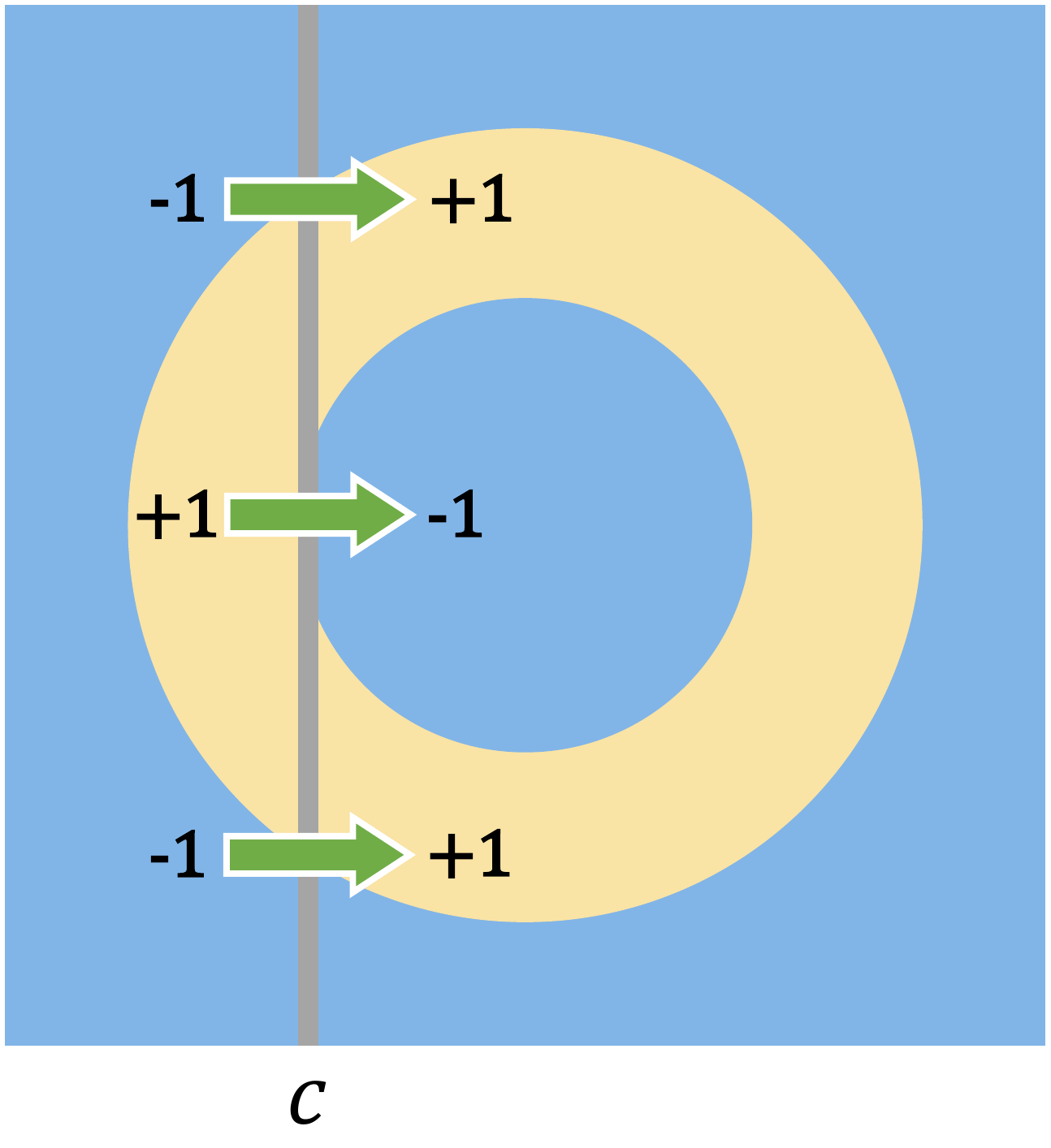}}
\subcaptionbox{\label{subfig:diag_arrow}Diagonal}
{\includegraphics[width=0.32\textwidth]{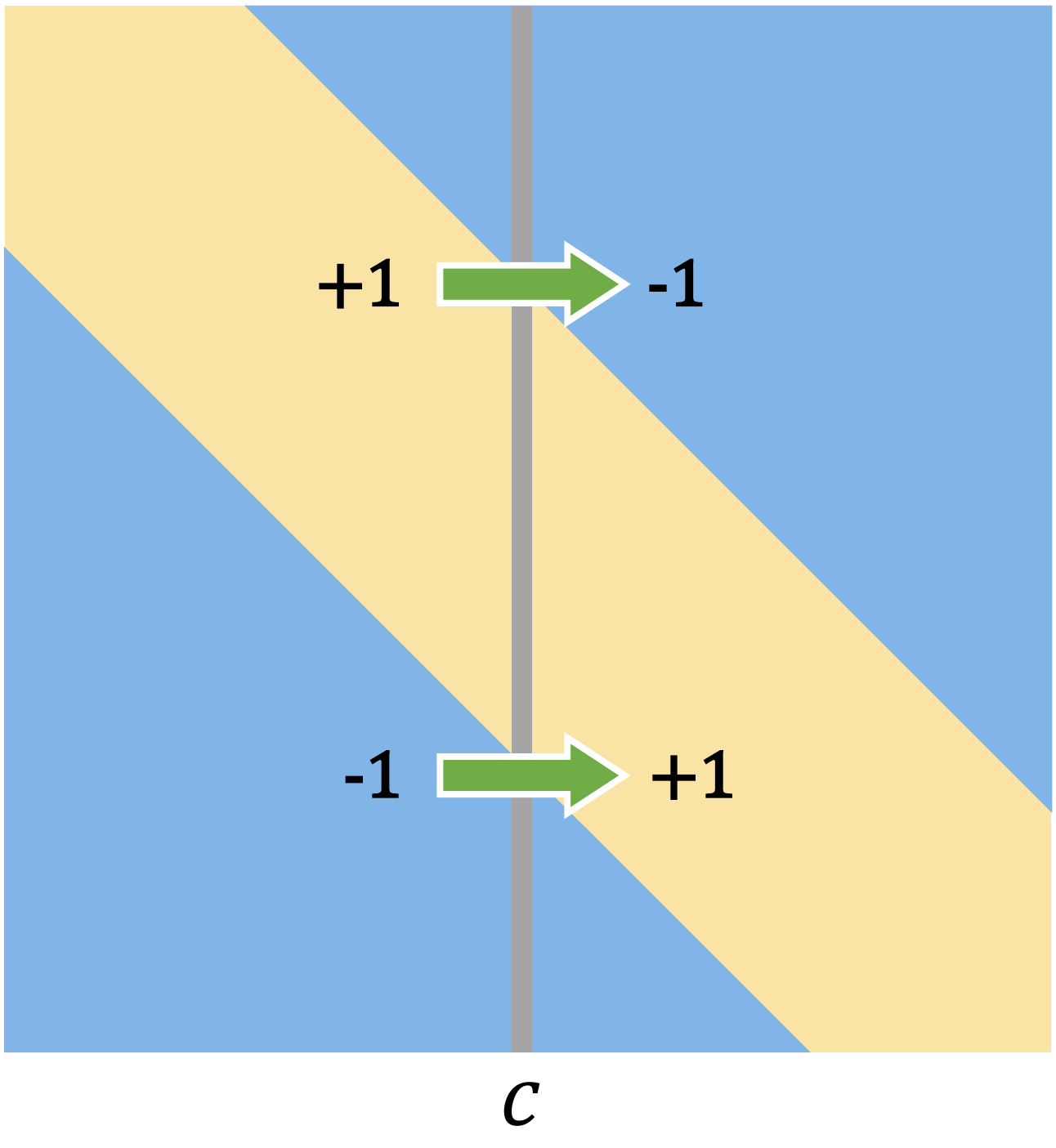}}
\caption{\label{fig:not_comonotonic} The populations WITHOUT comonotonicity.}
\end{figure}

The following theorem shows that AdaBoost based on stumps cannot deal with populations without comonotonicity.

\begin{thm}
For a population in $\mathbb{R}^k$ with a Bayes classifier $f^B(x_1,x_2,\dots, x_k)$, a necessary condition for the classifier trained by AdaBoost based on stumps can converge to $f^B$ as the number of iterations $m\to\infty$ is: the population is comonotonic. 
\end{thm}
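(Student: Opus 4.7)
The strategy is the contrapositive: assume the classifier $F_m = \sum_{t=1}^m \alpha_t h_t$ produced by AdaBoost with stump base learners satisfies $\operatorname{sign}(F_m) \to f^B$, and deduce that $f^B$ must be comonotonic. The cornerstone observation is that a linear combination of stumps is an additive model on the coordinates: since each stump $h_t(x) = s_t\,\mathrm{sign}(x_{i_t} - c_t)$ depends on a single coordinate, regrouping by splitting coordinate gives
$$F_m(x_1,\dots,x_k) = \sum_{j=1}^{k} G_j^{(m)}(x_j), \qquad G_j^{(m)}(x_j) = \sum_{t:\, i_t = j} \alpha_t s_t\,\mathrm{sign}(x_j - c_t).$$
Fixing a coordinate $i$ and two values $a, b$, the additive structure yields
$$F_m(X_{[-i]}, a) - F_m(X_{[-i]}, b) = G_i^{(m)}(a) - G_i^{(m)}(b) =: \delta_m(a, b),$$
a quantity \emph{independent} of $X_{[-i]}$.

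The key structural consequence is that $\mathrm{sign}(F_m)(X_{[-i]}, a) - \mathrm{sign}(F_m)(X_{[-i]}, b) \in \{-2, 0, 2\}$, and its sign is controlled by the constant $\delta_m(a,b)$: if $\delta_m(a,b) > 0$ only values in $\{0, 2\}$ are possible (since $F_m(\cdot, a) > F_m(\cdot, b)$ pointwise rules out a $+1 \to -1$ transition), if $\delta_m(a,b) < 0$ only values in $\{-2, 0\}$, and if $\delta_m(a,b) = 0$ only $0$. Hence every finite-iteration AdaBoost-on-stumps classifier already satisfies the comonotonicity condition — globally, not just locally around any $c$.

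It remains to transfer this property to $f^B$ via the limit. Suppose for contradiction $f^B$ violates comonotonicity at some $(c, i)$. Then for every $\varepsilon > 0$ we can find $a \in (c-\varepsilon, c)$, $b \in (c, c+\varepsilon)$ and witness points $y, z \in \mathbb{R}^{k-1}$ with
$$f^B(y, a) = +1,\ f^B(y, b) = -1,\qquad f^B(z, a) = -1,\ f^B(z, b) = +1.$$
For any $m$, matching $f^B$ at $(y, a), (y, b)$ forces $\delta_m(a,b) > 0$, while matching at $(z, a), (z, b)$ forces $\delta_m(a,b) < 0$ — a flat contradiction. So \emph{at least one} of these four points is misclassified by every $F_m$. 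The main obstacle of the proof is upgrading this ``at least one point is misclassified'' statement into a contradiction with the convergence $\mathrm{sign}(F_m) \to f^B$: one must promote the witnesses $y, z$ to neighborhoods of positive $\mathbb{P}_{X_{[-i]}}$-measure on which the four-sign pattern persists (using the absolute continuity of $\mathbb{P}_X$ together with mild regularity of $f^B$ near these witnesses, as holds in the XOR, ring and diagonal examples of Figure~\ref{fig:not_comonotonic}). Once this is done, for each $m$ the additive constraint forces a positive-measure set to be misclassified, so $\liminf_m \mathbb{P}\{\mathrm{sign}(F_m)(X) \neq f^B(X)\}$ stays bounded away from zero, contradicting convergence to $f^B$ and completing the proof.
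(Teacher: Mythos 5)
Your proposal takes essentially the same route as the paper's proof: you use the fact that a weighted sum of stumps is additive across coordinates, so the difference $F_m(X_{[-i]},a)-F_m(X_{[-i]},b)=\delta_m(a,b)$ is independent of $X_{[-i]}$, hence every finite-$m$ stump-based AdaBoost classifier already satisfies the comonotonicity pattern of Definition~\ref{def:comonotonicity}, and a non-comonotonic $f^B$ cannot be its limit. You are in fact more careful than the paper at one point: the paper states the constancy claim for $f^{(m)}$ itself (which includes the outer $\mathrm{sign}$), whereas it is the score function that is additive; your remark that a one-signed $\delta_m(a,b)$ confines the sign differences to $\{0,2\}$ or $\{-2,0\}$ is the right way to push the property through the $\mathrm{sign}$. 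The ``main obstacle'' you flag at the end is largely self-imposed: the theorem (like the paper's proof) does not fix a mode of convergence, and under the pointwise reading the paper implicitly uses, your four-witness argument already finishes --- if $\mathrm{sign}(F_m)\to f^B$ at $(y,a),(y,b),(z,a),(z,b)$ then for large $m$ all four are classified correctly, contradicting the fact that $\delta_m(a,b)$ cannot be simultaneously positive and negative, with no positive-measure upgrade needed. If you insist on convergence in probability, your sketch does need regularity of $f^B$ beyond the statement and is not fully closed as written, but that looseness is shared with (indeed not addressed at all by) the paper's own limit-passing step.
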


\begin{proof}
The AdaBoost.M1 algorithm in  \citet[p. 339]{book:esl} shows that, if the number of iterations is $m$, the final strong classifier $f^{(m)}$ must takes the form 
$$f^{(m)}(x_1,x_2,\dots,x_k)=\mathrm{sign}\left[\sum_{s=1}^{m}\alpha_s g_s (x_1,x_2,\dots,x_k)\right],$$
where $g_s(\cdot), s=1,\dots,m$ are base learners (stumps). In other words, $f^{(s)}(\cdot)$ must be a linear combination of base learners. 

A stump ``$\mathrm{A}$'' with $k$ variables can be expressed as
$$
\textrm{stump}_\textrm{A}(x_1, x_2,\dots,x_k) = \mathrm{sign}(x_i\bowtie c):=
\begin{cases}
  1, & x_i \bowtie c, \\
  -1, & \mbox{otherwise},
\end{cases}
$$
where $\bowtie \in \{\geq, \leq, >,<\}$, and the splitting variable of the stump is the $i$-th feature. 

Without loss of generality, we require that $\bowtie$ can only be $\leq$ or $>$.
Then the linear combination of stumps trained by AdaBoost can be represented as
$$
f^{(m)}(x_1,x_2,\dots,x_k)=\mathrm{sign}\left[L +\sum_{s=1}^{m}\alpha_s \mathrm{sign}(x_{i_s}\bowtie c_s)\right].
$$
where $L$ is a constant, $x_{i_s}$ is the splitting variable of the $s$-th stump, and $c_s$ is the splitting value of the $s$-th stump. 
Since $\mathrm{sign}(x_{i}>c) = -\mathrm{sign}(x_{i}\leq c) $, we can adjust all inequality signs to the same direction:
$$
f^{(m)}(x_1,x_2,\dots,x_k)=\mathrm{sign}\left[L + \sum_{s=1}^{m}\alpha_s \mathrm{sign}(x_{i_s}\leq c_s)\right].
$$

For simplicity, let us consider the 2-dim case ($k=2$). One can generalize the following conclusions to high-dimensional spaces similarly. According to the splitting variable of each stump, we can separate the $m$ stumps into two groups as 
$$
f^{(m)}(x_1,x_2)=\mathrm{sign}\left[L + \left(\sum_{i=1}^{m_1}\beta_i \mathrm{sign}(x_1\leq a_i)\right) + \left(\sum_{j=1}^{m_2}\gamma_j \mathrm{sign}(x_2\leq b_j)\right)\right],
$$
where $m_1$ is the number of stumps with $x_1$ as the splitting variable, $m_2$ is the number of stumps with $x_2$ as the splitting variable, and $m_1+m_2=m$. Without loss of generality, we assume that $a_1\leq a_2 \leq \dots \leq a_{m_1}$, and $b_1 \leq b_2 \leq \dots \leq b_{m_2}$.

Recall the definition of comonotonicity (Definition \ref{def:comonotonicity}). For any constant $c$, take any $\varepsilon>0$, $a\in(c-\varepsilon,c)$ and $b\in(c,c+\varepsilon)$. We sort $a,b,c$ and $a_1,\dots,a_{m_1}$ together as
$$
a_{m_{1,a}} \leq a \leq a_{m_{1,a}+1} \leq \dots \leq a_{m_{1,c}} \leq c \leq a_{m_{1,c}+1} \leq \dots \leq a_{m_{1,b}} \leq b \leq a_{m_{1,b}+1}.
$$
Then, from the expression of $f^{(m)}(x_1,x_2)$, we have
$$
f^{(m)}(a,x_2)-f^{(m)}(b,x_2)\equiv\sum_{i=m_{1,b}}^{m_{1,a}}\beta_i, \quad\forall x_2,
$$
i.e., it does not depend on $x_2$. Similarly, we also have that $f^{(m)}(x_1,a)-f^{(m)}(x_1,b)$ does not depend on $x_1$. Let $m\to\infty$, and if the algorithm will converge, then
$$
\lim_{m\to\infty} \left[ f^{(m)}(a,x_2)-f^{(m)}(b,x_2) \right]
$$
and
$$
\lim_{m\to\infty} \left[ f^{(m)}(x_1,a)-f^{(m)}(x_1,b) \right]
$$
will also be constants which do not depend on $x_2$ and $x_1$, respectively. Therefore, according to the definition of comonotonicity, $f^{(m)}$ cannot converge to $f^B$ if the population is not comonotonic. 
\end{proof}

To show the intuition of $f^{(m)}$ in the proof above, let $f(x_1,x_2)=L + \left(\sum\limits_{i=1}^{m_1}\beta_i \mathrm{sign}(x_1\leq a_i)\right) + \left(\sum\limits_{j=1}^{m_2}\gamma_j \mathrm{sign}(x_2\leq b_j)\right)$. Fig. \ref{fig:explanation_of_f} illustrates the property of the final strong classifier $f$.

\begin{figure}[htpb]
\centering
\subcaptionbox{\label{subfig:explanation_3d}}
{\includegraphics[width=0.45\textwidth]{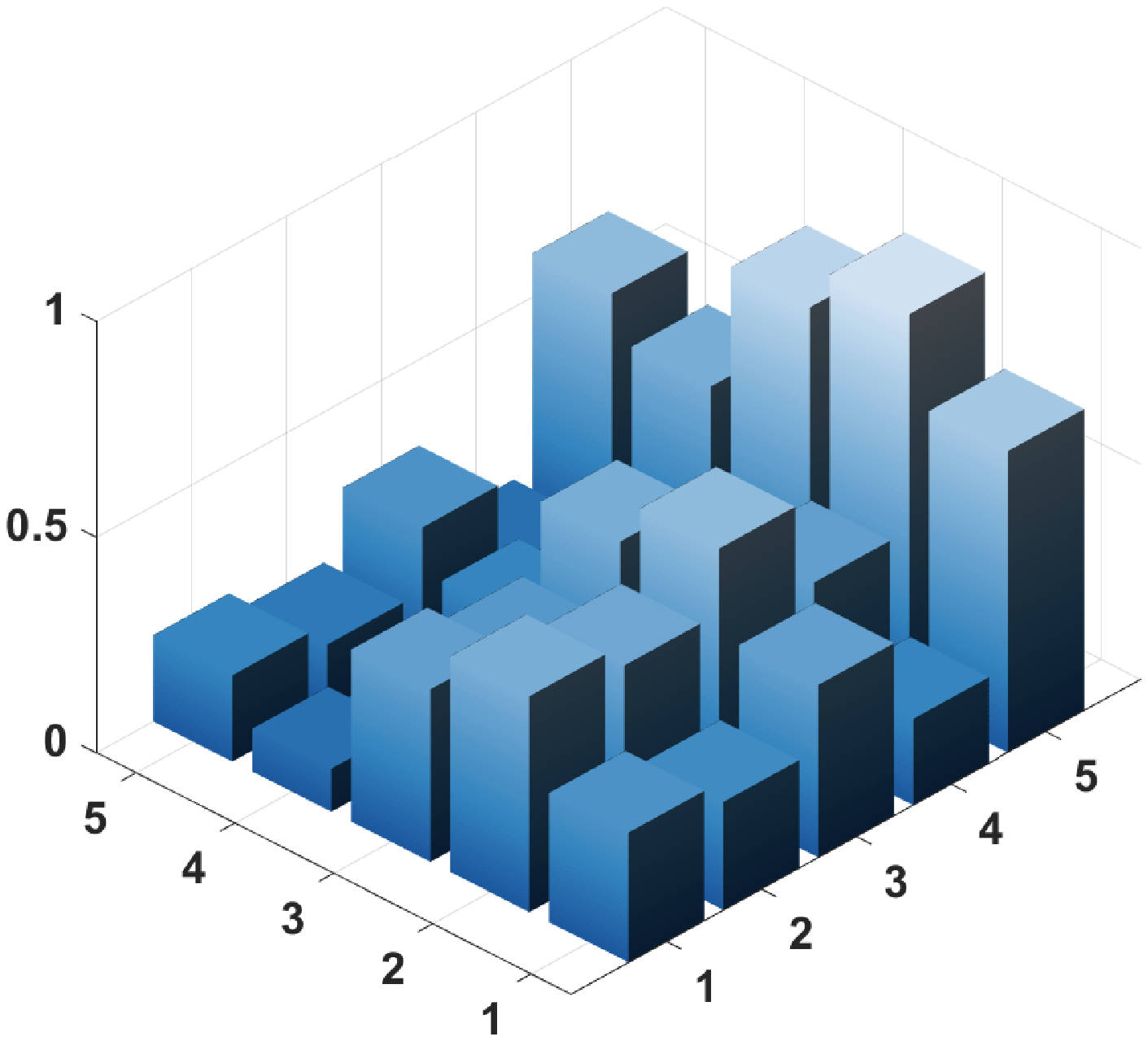}}
\subcaptionbox{\label{subfig:explanation_2d} The bird's-eye view of (\subref{subfig:explanation_3d})}
{\includegraphics[width=0.53\textwidth]{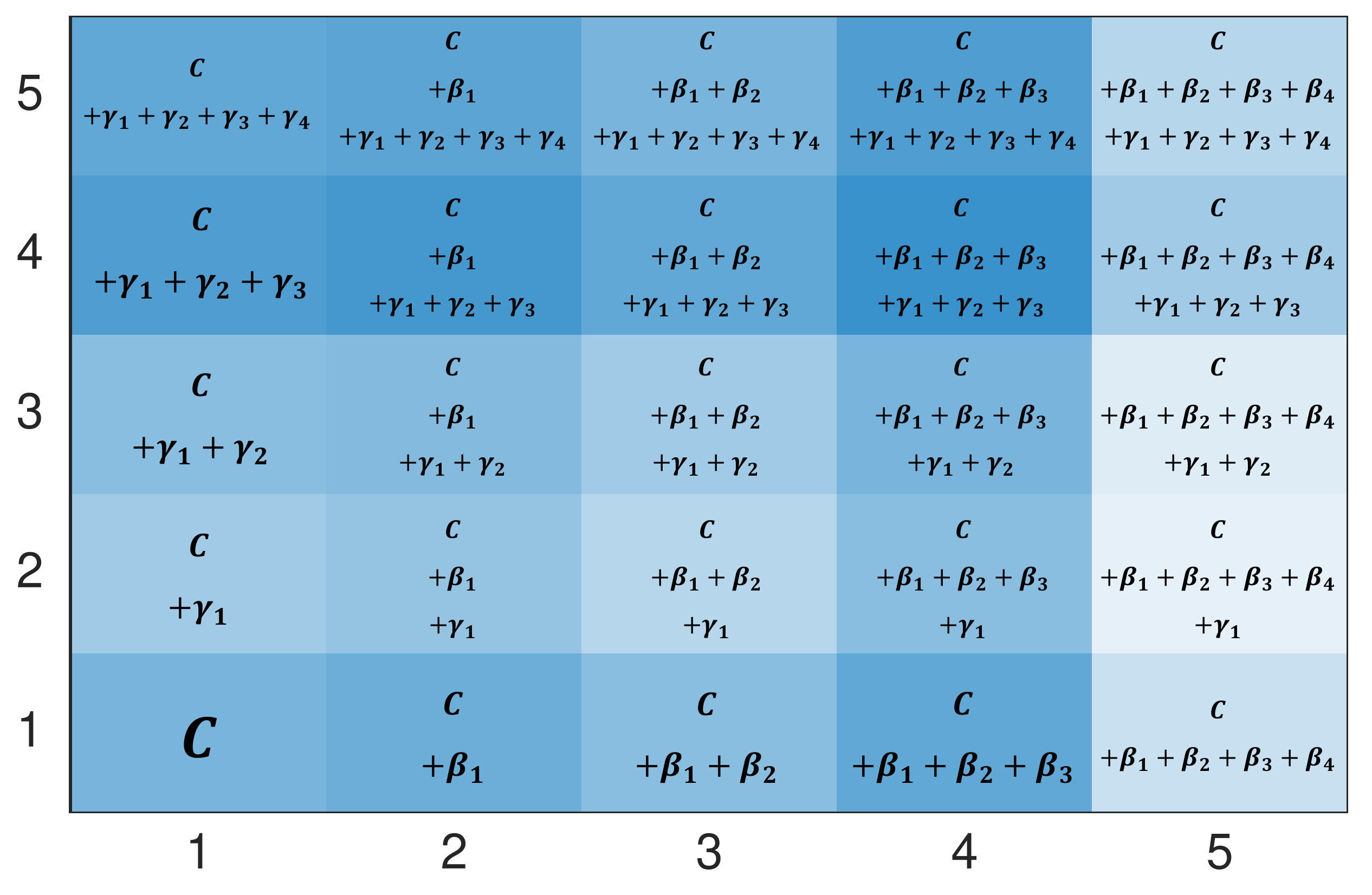}}
\caption{\label{fig:explanation_of_f} An example of the strong classifier $f$.}
\end{figure}

Fig. \ref{fig:explanation_of_f} is a toy example of the strong classifier $f$ with $M_1=5$ and $M_2=5$. Fig. \ref{fig:explanation_of_f}(\subref{subfig:explanation_3d}) is the graph of the function $f(x_1,x_2)$, and \ref{fig:explanation_of_f}(\subref{subfig:explanation_2d}) is the bird's-eye view of \ref{fig:explanation_of_f}(\subref{subfig:explanation_3d}). The darker the color is, the smaller the value of $f$ takes. The value of $f$ is written explicitly on \ref{fig:explanation_of_f}(\subref{subfig:explanation_2d}), which shows that, the values in row 2 are $\gamma_1$ greater than row 1, and the values in row 3 are $\gamma_2$ greater than row 2, and so on. Similarly, the values in column 2 are $\beta_1$ greater than column 1, and the values in column 3 are $\beta_2$ greater than column 2. All numbers in the grids increase or decrease \emph{the same} values, from left to right, and from bottom to top. It is the pattern of comonotonicity.

We have already shown in Fig. \ref{fig:not_comonotonic}(\subref{subfig:XOR_arrow}) that the XOR function is not comonotonic. Therefore, if the Bayes classifier of a population is the XOR function, it is impossible to give a good answer to the classification problem by training AdaBoost based on stumps. The conclusion in this section can be regarded as a generalization of Section \ref{subsection: XOR}.

In portfolio management, it is very common that factors may have interactions among each others. Hence, non-comonotonic populations are not rare. Although AdaBoost based on stumps can achieve good results in some areas, in financial studies, just based on stumps is far from reaching the desired goal. In Section \ref{sec:empirical}, we use empirical studies to show that, using deeper trees as base learners of AdaBoost is usually a better choice in portfolio management.

\section{Empirical studies}\label{sec:empirical}
In this section, we use the data of the Chinese A-share market to give empirical studies about the factor investing strategy based on AdaBoost. How to construct a stock factor strategy is an open problem with long history in portfolio management. From \citet{wang2012DB} invented the N-LASR to 
\citet{fievet2018trees} proposed a decision tree forecasting model, and 
\citet{gu2018_ML_asset_pricing} and \citet{dhondt2020} gave a comprehensive analysis of machine learning methods for the canonical problem of empirical asset pricing,
all of them agree that it may improve the strategy performance if the prediction model can dig out nonlinear and complex information. 

Our empirical studies have two goals. On the one hand, by selecting an optimal portfolio management strategy based on AdaBoost, we want to verify the general theoretical results about the interpolation and localization of AdaBoost in Section \ref{sec:ION and AdaBoost} and Section \ref{sec:XOR_counter}. On the other hand, we want to illustrate the good performance of the equal-weighted strategy based on AdaBoost.

In order to achieve the first goal, we give a sensitivity analysis about the depth of the base learners (decision trees) and the number of iterations of AdaBoost on the training set and the test set. We specifically explain the performance of AdaBoost that it can dig out useful information efficiently, as well as decrease the test error.

\subsection{Data}
The empirical data starts in June 2002 and ends in June 2017, 181 months in total. All stocks traded in the Chinese A-share market are included. 60 factors are used in our strategy. The data of the factor exposures and the monthly returns are downloaded from the Wind Financial Terminal\footnote{\url{https://www.wind.com.cn/en/Default.html}}. The 60 factors include not only the fundamental factors, but also the technical factors, such as the momentum and the turnover. All 60 factors are listed in Table \ref{tab:factors}. 

The original data has been preliminarily cleaned, but we still need to do some preprocessing before training. We remove all stocks which are not traded (or cannot be traded due to the limit-up or limit-down in the Chinese market) during the period we study. We remove the factors with over $10\%$ missing data, and fill in the missing data of other factors with 0. For each month, we assign the response variables $Y$ of all stocks according to their ranks of the next-month returns cross-sectionally. The response variables of the top $50\%$ stocks are $+1$, and that of the bottom $50\%$ stocks are $-1$.

We divide all data into a training set and a test set manually. Total 181 months' data is divided into two sets: the first 127 months' data (June 2002--December 2012) is taken as the training set, and the last 54 months' data (January 2013--June 2017) is taken as the test set. Then, the size of the training set is 193455 (sum of the stock numbers in all months), and the size of the test set is 133277. We use the training set to fit models, and then use the test set to evaluate the models and verify our conclusions in Section \ref{sec:ION and AdaBoost} and Section \ref{sec:XOR_counter}. 

\begin{table}[H]
\scriptsize
\centering
\caption{The 60 factors.}
\label{tab:factors}
\begin{tabular}{cccccc}
\hline
\hline
alr                & IR\_bps\_252 & IR\_netasset\_126    & IR\_net\_profit\_63    & IR\_roe\_252      & net\_assets          \\
amount\_21         & IR\_bps\_63  & IR\_netasset\_252    & IR\_oper\_rev\_126     & IR\_roe\_63       & oper\_rev\_ttm       \\
avg\_volume\_63    & IR\_eps\_126 & IR\_netasset\_63     & IR\_oper\_rev\_252     & IR\_totasset\_126 & pb                   \\
bps                & IR\_eps\_252 & IR\_net\_profit\_126 & IR\_oper\_rev\_63      & IR\_totasset\_252 & q\_eps               \\
IR\_bps\_126       & IR\_eps\_63  & IR\_net\_profit\_252 & IR\_roe\_126           & IR\_totasset\_63  & q\_grossprofitmargin \\
\hline
q\_netprofitmargin & q\_ps        & rt\_252              & tot\_assets            & ttm\_pcf          & turnover\_126        \\
q\_oper\_rev       & q\_roa       & rt\_63               & ttm\_eps               & ttm\_pe           & turnover\_21         \\
q\_orps            & q\_roe       & shr\_float2tot       & ttm\_grossprofitmargin & ttm\_ps           & turnover\_252        \\
q\_pcf             & rt\_126      & s\_dq\_mv            & ttm\_netprofitmargin   & ttm\_roa          & turnover\_63         \\
q\_pe              & rt\_21       & s\_val\_mv           & ttm\_orps              & ttm\_roe          & val\_float2tot \\ 
\hline
\hline
\end{tabular}
\end{table}

\subsection{The performance of the AdaBoost classifiers}
In this section, we analyze how the performance of the classifiers trained by AdaBoost varies with the two hyperparameters: the depth of the base learners (decision trees), and the number of iterations. Both hyperparameters are typically the source of overfitting in common sense. More specifically, we consider the following hyperparameters:
\begin{itemize}
    \item Max$\_$Depth: The maximum depth of the base learners (decision trees), takes $2, 4, 6, 8,$ and 10, respectively;
    \item N$\_$Steps: The number of iterations, takes $10, 20, 30, 40, 50, 100, 500,$ and 1000, respectively.
\end{itemize}

In order to analyze the influence of these two hyperparameters on the fitting ability of AdaBoost, we fix other parameters, and set the learning rates of all models as 0.1. Both in the training set and in the test set, we use the AUC (area under the ROC curve) to measure the performance of the models, which is a supplement to the usual error calculation.

The performance results of all models we studied are summarized in Table \ref{tab:ParametersAndResults}. Based on the results, we observe that:

\begin{itemize}
    \item The training/test AUC and the training/test error are consistent, since if the AUC is high, the error will be low in almost every scenarios.
    For example, when Max\_Depth $=2$ and N\_Steps $=10$ (the 1st model), the training AUC is 0.5412 while the training error is 0.4701; And when Max\_Depth $=2$ and N\_Steps $=20$ (the 2nd model), the training AUC is 0.5436 while the training error is 0.4700---They vary in different directions.

    \item 
    The training AUC increases monotonically as the complexity of the model increases. 
    Specifically, 
    from the first model to the last model, the complexity increases.
    Meanwhile, the training AUC increases from 0.5412 to 0.6828; The training error decreases too.

    \item The test AUC also almost increases monotonically as the complexity of the model increases. 
    For instance, when Max\_Depth $=2$ and N\_Steps $=10, \dots, 50$ (the 1st-5th models), the test AUC increases from 0.5433 to 0.5480; When N\_Steps $=20$ and Max\_Depth $=2, 4, 6$ (the 2nd, 7th and 12th models), the test AUC also increases from 0.5462 to 0.5490. 

    \item
    The changes of the test AUC are relatively small and stable with regard to that of the training AUC.
    For example, for the first 15 models, the test AUC changes from 0.5433 to 0.5513, while the training AUC changes from 0.5412 to 0.5946.
    It suggests that the test AUC around $ 54\%$ may be a stable threshold of the model, which reflects the ability of our methods to dig out the market information contained in our dataset.
    It is noteworthy that, $54\%$ is not a bad result in the Chinese stock market, according to the experience of the industry.
    
    \item 
    In the training set, the performance is more sensitive to Max\_Depth than to N\_Steps.
    In detail, given Max\_Depth $=2$, the training AUC changes from 0.5412 to 0.5533 for N\_Steps $=10,\dots,50$ (the 1st-5th models);
    However, given N\_Steps $=10$, the training AUC changes from 0.5412 to 0.5818 for Max\_Depth $=2, 4, 6$.
    
\end{itemize}

We find that, as the depth of the trees and the number of iterations increases, the AUC for the test set increases stably without significant change. We can conclude that, in these cases, the more iteration steps, the better the classifier, and the more complex the base learner trees, the better the classifier. 

\begin{table}[H]
\footnotesize
\centering
\caption{Model performance results.}
\label{tab:ParametersAndResults}
\begin{tabular}{c|cc|cc|cc}
\hline
\hline
\multirow{2}{*}{Model No.} & \multicolumn{2}{c|}{Hyperparameters} & \multicolumn{2}{c|}{Training Set} & \multicolumn{2}{c}{Test Set} \\
\cline{2-7} & Max\_Depth & N\_Steps & Training AUC & Training Error & Test AUC & Test Error  \\
\hline
1         & 2          & 10            & 0.5412 & 0.4701 & 0.5433 & 0.4713    \\
2         & 2          & 20            & 0.5436 & 0.4700 & 0.5462 & 0.4741    \\
3         & 2          & 30            & 0.5499 & 0.4665 & 0.5468 & 0.4728   \\
4         & 2          & 40            & 0.5511 & 0.4656 & 0.5476 & 0.4716  \\
5         & 2          & 50            & 0.5533 & 0.4636 & 0.5480 & 0.4714  \\
\hline
6         & 4          & 10            & 0.5628 & 0.4545 & 0.5463 & 0.4671   \\
7         & 4          & 20            & 0.5682 & 0.4515 & 0.5487 & 0.4697   \\
8         & 4          & 30            & 0.5699 & 0.4500 & 0.5489 & 0.4681    \\
9         & 4          & 40            & 0.5713 & 0.4505 & 0.5498 & 0.4669   \\
10        & 4          & 50            & 0.5723 & 0.4498 & 0.5500 & 0.4669  \\
\hline
11        & 6          & 10            & 0.5818 & 0.4418 & 0.5458 & 0.4715   \\
12        & 6          & 20            & 0.5870 & 0.4392 & 0.5490 & 0.4683 \\
13        & 6          & 30            & 0.5913 & 0.4353 & 0.5502 & 0.4675 \\
14        & 6          & 40            & 0.5930 & 0.4346 & 0.5506 & 0.4676  \\
15        & 6          & 50            & 0.5946 & 0.4338 & 0.5513 & 0.4670 \\
\hline
16        & 8          & 100           & 0.6300 & 0.4108 & 0.5519 & 0.4663  \\
17        & 8          & 500           & 0.6356 & 0.4071 & 0.5531 & 0.4659  \\
18        & 8          & 1000          & 0.6358 & 0.4071 & 0.5532 & 0.4659  \\
\hline
19        & 10         & 100           & 0.6731 & 0.3805 & 0.5486 & 0.4679  \\
20        & 10         & 500           & 0.6769 & 0.3780 & 0.5502 & 0.4677  \\
21        & 10         & 1000          & 0.6828 & 0.3740 & 0.5499 & 0.4681  \\
\hline
\hline
\end{tabular}
\end{table}

\subsection{Strategy results}

\begin{figure}[htpb]
\centering
\subcaptionbox{\label{subfig:longonly}Long-only strategy}
{\includegraphics[width=0.98\textwidth]{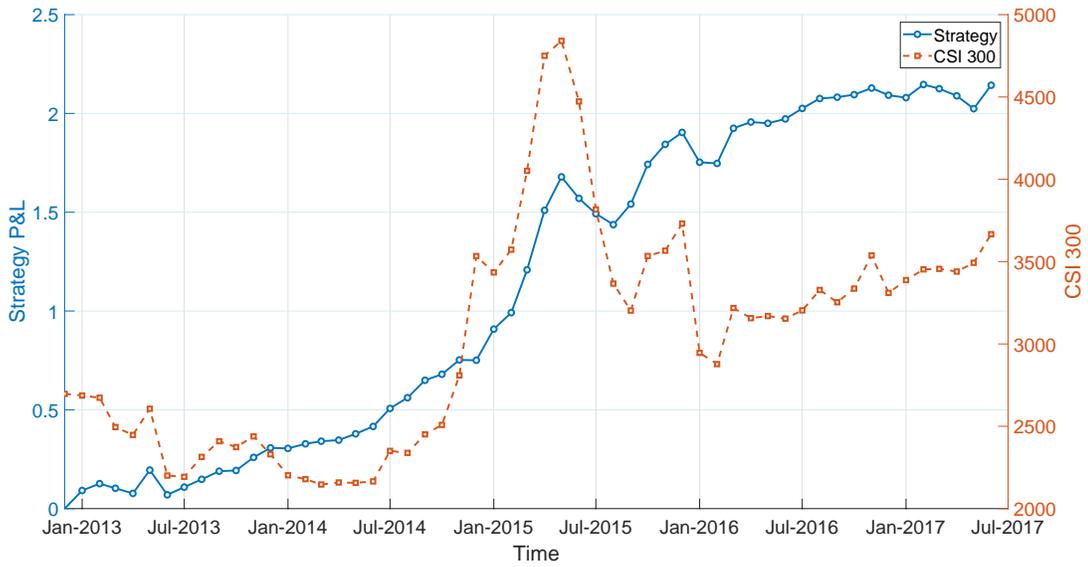}}
\subcaptionbox{\label{subfig:longshort}Long-short strategy}
{\includegraphics[width=0.98\textwidth]{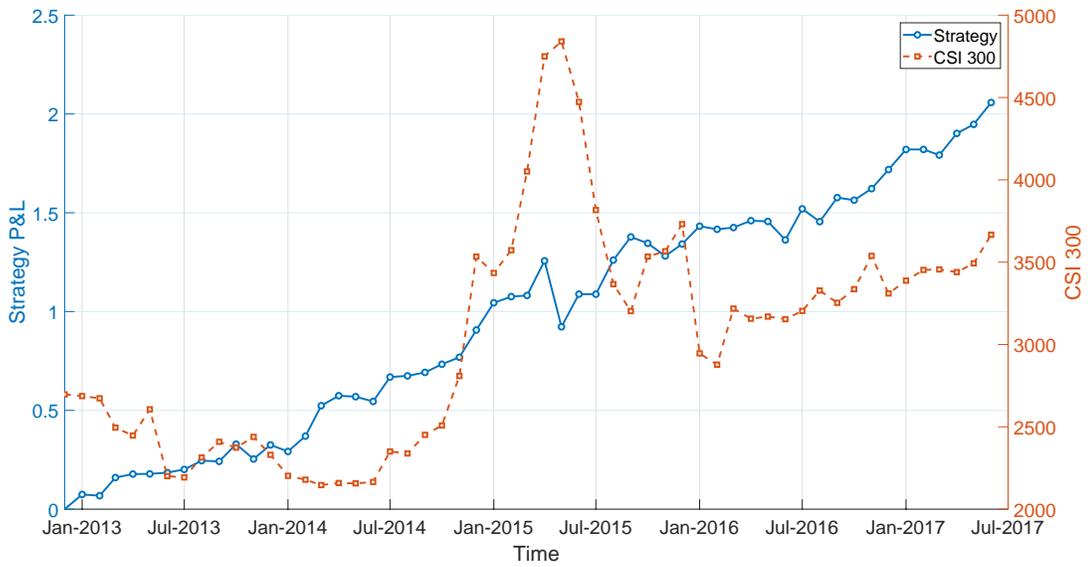}}
\caption{\label{fig:pnl} The P\&L of two strategies.}
\end{figure}

In this section, we use the classifier with good performance in the training set to establish equal-weighted strategies in the test set (January 2013--June 2017). We choose the 16th classifier in Table \ref{tab:ParametersAndResults}: Max$\_$Depth=8, and N$\_$steps=100. 

Our strategy is: At the beginning of each month, we treat the last month's exposures of the 60 factors for all stocks as the input of the classifier. Then, we use the output of the classifier to determine whether to take a long or a short position. For the long-short strategy, we long the top 50 stocks with the highest probability of going up, and short the bottom 50 stocks with the highest probability of going down; For the long-only strategy, we only long the top 50 stocks with the highest probability of going up. 

In Fig. \ref{fig:pnl}, we use the CSI 300 to denote the benchmark of the buy-and-hold strategy which we want to compare with. Fig. \ref{fig:pnl}(\subref{subfig:longshort}) and \ref{fig:pnl}(\subref{subfig:longonly}) show the P\&Ls of the long-only strategy and the long-short strategy, respectively. Both the transaction cost rates of taking a long position and a short position are set as 0.15\%. Table \ref{tab:StrategyPerformance} shows the performance summaries of our strategies, including the win rate, the Sharpe ratio, the average return, the standard deviation of return, and the maximum drawdown as different measures of the performance.

From the strategy performance, we have the following results:

\begin{itemize}
    \item The performance of our strategies is remarkable, although the AdaBoost classifier is fixed for the whole test period (January 2013--June 2017). The average annualized return for both the long-only and the long-short strategy are over 45\% during the test period. Particularly, the Chinese A-share market experienced a dramatic fluctuation in 2015, but our strategies (the blue solid line) still have a relatively robust return comparing with the CSI 300 benchmark (the orange dashed line). 
    
    \item The performance of our strategies is relatively stable, and even the maximum drawdown of the long-only strategy is much lower than the maximum drawdown of the benchmark CSI 300 (72.80\%). The maximum drawdown 24.18\% of the long-only strategy occurred in August 2015, and 33.41\% of the long-short strategy occurred in May 2015. 
    
\end{itemize}

Once again, we should note that the goal of these empirical studies is to find an optimal strategy by AdaBoost which is a complicated classifier in-sample, as well as to demonstrate that the performance of the complicated model is not bad out-of-sample. 

Overall, we can conclude from the empirical studies:
\begin{itemize}
    \item For the equal-weighted factor investing strategy, increasing the depth of the base learners (decision trees) and the number of iterations of AdaBoost do not significantly reduce the out-of-sample performance;
    \item For factor investing strategies, it is a feasible way for investors to use the complicated AdaBoost method to learn in the training set.
    
\end{itemize}

\begin{table}[H]
\centering
\caption{Strategy performance results.}
\label{tab:StrategyPerformance}
\begin{tabular}{c|ccccc}
\hline
Strategy & Win Rate & Sharpe Ratio & Average Return & Std. & Maximum Drawdown \\
\hline
long-only & 0.704 & 1.665 & 47.62\% & 28.60\% & 24.18\% \\
long-short & 0.704 & 1.568 & 45.74\%  & 29.17\% & 33.41\% \\
CSI 300 & 0.537 & 0.243 & 7.99\% & 32.83\% & 72.80\% \\
\hline
\end{tabular}
\end{table}

\section{Conclusion}\label{sec:conclusion}

In order to implement machine learning in constructing equal-weighted portfolios interpretatively, our paper explains the success of AdaBoost and applies it in portfolio management.

We prove that a ``complex'' method, even interpolating, does not necessarily result in poor out-of-sample performance if it can lower the influence of the noise points. 
AdaBoost based on large trees without early stopping could be one of such methods, and we explain its success with our newly defined ION.

We illustrate the shortcomings of AdaBoost based on shallow trees from the perspective of its ability to dig out non-linear information, and emphasize the benefits of AdaBoost based on large trees.

Our empirical studies not only corroborate the theoretical results, but also give an effective approach to construct equal-weighted portfolios via machine learning.

Conclusively, we show that AdaBoost can minimize the influence of the noise points while interpolating the training set, and thus have a good out-of-sample performance. 
We confirm the conjectures in \citet{wyner2017AdaBoost_RandomForest} under a mathematical framework.
The empirical studies verify the potential applications of AdaBoost in portfolio management.

\appendix

\bibliography{mybib}
\end{document}